
\documentclass[fullpaper, final]{nldl}

\paperID{3}

\vol{307}

\usepackage{mathtools}
\usepackage{amsthm}
\usepackage{amsmath}
\usepackage{amssymb}
\theoremstyle{definition}
\newtheorem{definition}{Definition}
\theoremstyle{plain} 
\newtheorem{theorem}{Theorem}[section]
\newtheorem{proposition}{Proposition}[section]
\theoremstyle{remark}
\newtheorem{remark}{Remark}[section]
\newtheorem{example}{Example}[section]
\DeclareMathOperator*{\argmin}{arg\,min}
\usepackage{graphicx}

\usepackage{booktabs}

\usepackage{enumitem}

\usepackage{algorithm}
\usepackage{algorithmic}

\usepackage{listings}
\lstset{
  basicstyle=\small\ttfamily,
  breaklines,
}

\addbibresource{references.bib}

\usepackage{hyperref}
\usepackage{url}
\hypersetup{
  pdfusetitle,
  colorlinks,
  linkcolor = BrickRed,
  citecolor = NavyBlue,
  urlcolor  = Magenta!80!black,
}

\title{CID: Measuring Feature Importance Through Counterfactual Distributions}
\author[1]{Eddie Conti\thanks{Corresponding Author.}}
\author[1]{Álvaro Parafita}
\author[1]{Axel Brando}
\affil[1]{TAIES Group, HPES Lab, Barcelona Supercomputing Center (BSC)}
\affil[ ]{\texttt{\{econti,aparafita,abrando\}@bsc.es}}

\begin{document}
\maketitle

\begin{abstract}
Assessing the importance of individual features in Machine Learning is critical to understand the model's decision-making process. While numerous methods exist, the lack of a definitive ground truth for comparison highlights the need for alternative, well-founded measures.
This paper introduces a novel post-hoc local feature importance method called Counterfactual Importance Distribution (CID). We generate two sets of positive and negative counterfactuals, model their distributions using Kernel Density Estimation, and rank features based on a distributional dissimilarity measure. This measure, grounded in a rigorous mathematical framework, satisfies key properties required to function as a valid metric. We showcase the effectiveness of our method by comparing with well-established local feature importance explainers. Our method not only offers complementary perspectives to existing approaches, but also improves performance on faithfulness metrics (both for comprehensiveness and sufficiency), resulting in more faithful explanations of the system. These results highlight its potential as a valuable tool for model analysis.
\end{abstract}

\section{Introduction}\label{sec:intro}
Explainable AI (XAI) addresses one of the most pressing challenges in modern Machine Learning: understanding and clarifying how a model produces its outputs. With the extensive adoption of black-box models in sensitive and high-stakes domains---such as medical diagnosis, credit scoring, and criminal justice---the need for methods that are understandable has become not merely desirable but indispensable. In these contexts, the consequences of opaque decision-making can be severe, ranging from unequal treatment to life-altering outcomes (\citep{fairness}, \citep{ethics_ai}, \citep{ethics_ai2}).

One way to tackle this problem is to detect which features are more relevant in generating the output (feature attribution). To do so, the field focused on two classes of methods: intrinsic and post-hoc methods (\citep{intrinsic_posthoc}). Intrinsic methods employ model architectures that are interpretable by design, whereas post-hoc methods operate independently of the model by analyzing its outputs. Which method is preferable is subject of scientific debate, with conflicting opinions and arguments (\citep{general_fi}).

Among post-hoc methods,
counterfactual (CF) explanations, first introduced in the context of XAI by \citep{watcher}, have emerged as a popular approach for model interpretability. A CF explanation describes a situation of the form: “If input features X had taken different values, then model's output Y would have been different”. Counterfactuals are human-friendly explanations, because they are contrastive to the current instance and because they are often selective, meaning they usually focus on a small number of feature changes (\citep{Diff_expl,formal_counterfactuals}), which highlights the most relevant changes needed to alter the outcome. For instance, if a person named Peter is denied a loan, a counterfactual explanation could state: If Peter earned 10,000 more per year, he would receive the loan.

We propose a novel approach, which we refer to as Counterfactual Importance Distribution (CID), based on the generation of CFs for a specific instance. Specifically, we rank dimensions where positive and negative CFs---i.e., those that succeed and fail in changing the desired outcome, respectively---differ the most. In this way, we are estimating which entries are most important for the model.  Section \ref{math_section}
starts with the overview of the proposed framework. In particular, we explain how to use positive and negative CFs and how to model their distributions (Sections \ref{sec:cfs_generation} and \ref{sec:kde_modelling}). Then, in Section \ref{sec:overlap_sect}, we introduce the intuition and later the definition of formula \eqref{eq_diss}, exploring its properties and geometrical meaning. We conclude the section with two important results, Theorems \ref{theoremdiss} and \ref{distance_theorem}, showing that our equation defines a measure of dissimilarity and a (metric) distance. Section \ref{exp_section} is devoted to the experiments; here, we compare CID with DiCE local feature importance scores as well as other metrics on two distinct datasets. In Section \ref{sec:limitation} we address the limitations of the proposed method as well as possible ways to extend it. In Section \ref{sec:applicability} we stress on the applicability of our method: it allows for any density estimation and counterfactual generator. We conclude the paper with Section \ref{sec:conclusion} in which we highlight the novelty of CID and the results in terms of faithfulness. In general, the contributions of this work can be summarized as follows:
\begin{enumerate}
\item We propose an innovative local feature importance method that uses positive and negative CFs to capture the relevance of features in generating a specific output. Our formulation takes into account support and density of the data, thus capturing nuances in their distribution.
\item We effectively prove that our proposed measure satisfies the mathematical properties required to qualify as a metric distance. This formal grounding ensures that the measure is not only conceptually valid but also mathematically rigorous.
\item We conduct extensive analyses on two datasets, comparing our metric with standard feature importance scores and highlighting its complementary nature. Furthermore, by evaluating local importance scores through comprehensiveness and sufficiency metrics, we demonstrate that our framework consistently yields more faithful and, therefore, reliable results.
\end{enumerate}

\section{Related Work}
Local explainability methods are used to provide insights into individual predictions in Machine Learning models where the relationship between features and outputs is complex or opaque. 
These methods attempt to explain the model's decision process for a particular prediction, in contrast to global interpretation methods that explain overall model behavior. 

Broadly speaking, most local methods focus on calculating the importance of each feature for the specific instance under analysis (\citep{Counterfactuals}).
Several methods have been proposed in recent years to tackle this challenge. For instance, Individual Conditional Expectation (ICE), first introduced by \citep{ICE}, is a technique that visualizes how the model’s prediction changes when a feature varies for a single instance. 
LIME (Local Interpretable Model-agnostic Explanations) (\citep{LIME}) provides another approach, where it approximates the complex model locally with an interpretable surrogate model by perturbing the input data around the instance of interest to fit a simpler model on these perturbed instances. From the fitted local model we obtain explanations for the original model.
Despite LIME's popularity, its reliance on local linearity makes it sensitive to the choice of proximity measures and perturbation methods (\citep{limitations_ml}).

CFs (\citep{watcher}) offer an alternative by identifying the minimal changes to the input features required to alter a model’s prediction. CFs focus on generating "what if" scenarios, offering users a counterfactual instance that illustrates what would need to happen for a different outcome. The main limitation of counterfactual explanations is that they are instance-specific, i.e. no general information about the model reasoning as a whole is extracted, as shown in \citep{cf_lim_1,cf_lim_2}.
Related to CFs, Anchors \citep{anchors} describe a prediction as being anchored by certain feature values, which lock the model’s prediction in place. Essentially, the method works by finding decision rules that “anchors” the prediction which remains unchanged regardless of variations in other features.

Shapley Values come from cooperative game theory and represent the average marginal contribution of a feature across all possible coalitions. Shapley values assign a score to each feature based on how much it contributes to the prediction in comparison to all other possible subsets of features. SHAP (\citep{SHAP}), an adaptation of Shapley values for machine learning, has become one of the most widely used methods for feature attribution, as it provides a theoretically grounded explanation that satisfies several desirable properties, such as local accuracy, missingness and consistency, even though it may suffer from correlation among features \citep{shap_lim}. 

Although several techniques exist in the literature LIME, SHAP and CFs are among the most popular approaches \citep{sota_1,sota_2} and have been widely adopted in various applications. 
However, different feature importance methods can yield different rankings of features, even for the same dataset and model. This is a critical aspect of feature importance methods and stems from many sources as the data complexity, stochastic components of the method or dataset properties (\citep{agree/disagree}, \citep{duan2024evaluation}, \citep{pawlicki2023towards}).
As highlighted by \citep{general_fi}, studies in this field rarely provide a clear justification for choosing one particular method over others. This lack of transparency makes it challenging to evaluate or select the best method for any given problem. Additionally, since there is no universally accepted `ground truth' in model explainability (\citep{noagreem},\citep{need_formality},\citep{inconsistency}), the validity of these methods often relies on  empirical validation rather than rigorous proofs. In this context, we compare CID with established methods, leveraging Feature Agreement and faithfulness metrics to provide a fair and comprehensive evaluation. We propose this new method since, as highlighted by studies such as \citep{plurality} and \citep{general_fi}, adopting a plurality of perspectives allows for a more thorough understanding.

Although there are other works that exploit CFs to estimate feature importance (\citep{FI_counterfactuals},\citep{FI_counterfactuals2}, \citep{FI_counterfactuals3}), to the best of our knowledge, our approach of measuring distributional differences stemming from counterfactuals is novel.

\section{Ranking dimensions with positive and negative CFs} \label{math_section}
In this section we propose the CID method which consists of three steps: 1) The generation of CFs, presented in Section \ref{sec:cfs_generation}. 2) The Kernel Density Estimation (KDE) modelling (Section \ref{sec:kde_modelling}) of the two sets of positive and negative CFs $C^+$ and $C^-$ across each dimension. 3) The rank of numerical features through the $d_1$ distance that we introduce in Section \ref{sec:overlap_sect}. 
Importantly, in our framework the value of $d_1$ for each feature directly represents its feature importance score.
\subsection{Generating Counterfactuals} \label{sec:cfs_generation}
Let us consider a black-box classifier $M \colon \mathbb{R}^{d} \to \{0,1\}$ and a certain input $\tilde{x} \in \mathbb{R}^{d}$ for which the output $y=M(\tilde{x})$ is observed. Counterfactuals are solutions to: 
\begin{equation*}
x_{cf} = \argmin_{x'} \mathcal{L}(M(x'),y_{target})+\lambda d (\tilde{x},x')
\end{equation*}
where $\mathcal{L}$ is a loss function, for instance the $L_2$ loss, $\lambda$ is a regularization parameter, $y_{target}$ is the desired outcome and $d(\cdot,\cdot)$ is a distance function ensuring that $\tilde{x}$ and its counterfactual $x'$ remain close. This initial approach, while straightforward, has notable limitations. In particular, it does not penalize unrealistic feature combinations.
Building on this foundation, \citep{Diff_expl}, proposed an alternative formulation that minimizes a combined loss function over all generated CFs. In particular, they do not only minimize the distance between the prediction for CFs and the desired outcome, but they also encode proximity, sparsity and diversity, all desirable properties (\citep{Diff_expl2,karimi2020model,formal_counterfactuals}). The goal is to obtain a variety (diversity) of CFs that that are close to the original input (proximity) and more feasible in the sense of changing fewer number of features (sparsity). 
In the experiments (Section \ref{exp_section}), we employ the DiCE library to create CFs, which make use of this formulation. Such generation often relies on techniques such as trial and error or optimization algorithms like NSGA-II (\citep{nsga}). The advantages and limitations of CFs have been widely studied in recent years. Please refer to \citep{review} and \citep{CF_survey} for a wider discussion on the topic. 

\subsection{Modelling distribution of CFs} \label{sec:kde_modelling}
Let us assume the generation of two sets of multiple CFs: positive CFs $C^{+}$ and negative CFs $C^{-}$, each with cardinality $m$. Positive CFs $C^+$ are those that successfully flip the output $y$,  while negative CFs $C^-$ fail to achieve the desired outcome. Our goal is to rank the dimensions in the input space $\mathbb{R}^{d}$ (or in the activation space of a neural network layer, for neural models) where $C^+$ and $C^-$ differ the most. This ranking provides insights into which dimensions are most critical in determining the model's output. 

A straightforward approach is to compute the variability of each dimension between $C^+$ and $C^-$. Specifically, fixed an ordering in $C^+$ and $C^-$, for each dimension $i$ we can compute:
\begin{equation*}
\frac{1}{m} \sum_{j=1}^m (x^+_{i,j}-x^-_{i,j})^2.
\end{equation*}
where $x^+_{i,j}$ and $x^-_{i,j}$ represent the $i$-th entry of the $j$-th counterfactual in $C^+$ and $C^-$ respectively. 

While variability provides a simple and effective heuristic, it does not account for the underlying data distribution. To address this, in the following subsection, we introduce a metric function that describes the difference between these distributions; for that reason, we need a way to model the density of sets $C^+$ and $C^-$, for which we employ KDE. The estimated distributions are given by
\begin{equation*}
\begin{aligned}
&P(x)_i= \frac{1}{mh_1} \sum_{j=1}^{m} K\Bigl( \frac{x-x^+_{i,j}}{h_1}\Bigr), \\
& Q(x)_i= \frac{1}{mh_2} \sum_{j=1}^{m} K\Bigl( \frac{x-x^-_{i,j}}{h_2}\Bigr)
\end{aligned}
\end{equation*}
where $K(\cdot)$ is the kernel function (e.g., Gaussian or Epanechnikov kernel; see \citep{kde}), $h_1$ and $h_2$ are bandwidth parameters for $C^+$ and $C^-$. By comparing the above distributions $P(x)_i$ and $Q(x)_i$ for each dimension we can capture more nuanced differences in how positive and negative CFs behave. This method is particularly useful in cases where feature variability alone may not fully capture the importance of a dimension. In order to ground this intuition within a mathematical framework, we introduce a notion that captures how the two distributions differ.

\subsection{The notion of overlap of functions} \label{sec:overlap_sect}
To formalize the intuition about how two distributions $p$ and $q$ differ, we introduce a measure of overlap $o(p,q)$. Before presenting the formal definition, we outline some desirable properties of such a measure:
\begin{enumerate}
    \item If $p= \mathbf{1}_{[0,1]}$\footnote{We denote $\mathbf{1}_{A}$ the indicator function of set $A$.} and $q= \mathbf{1}_{[1,2]}$, the overlap measure $o(p,q)$ should yield a small value, as the support of $p$ and $q$ intersect in a set of measure zero.
    \item On the other hand, if $p=q$ then $o(p,q)$ should attain its maximum value, indicating high overlap.
    \item Beyond support overlap, the measure should account for the densities of $p$ and $q$. For instance, if $\text{supp}(p)=[0,1]$ and $\text{supp}(q)=[0,100]$, but
    \begin{equation*}
    \int_1^{100} q(x)\,dx \approx \epsilon
    \end{equation*}
    where $\epsilon$ is a relatively small value, the overlap measure $o(p,q)$ should remain high, indicating a limited contribution of $q$ outside $\text{supp}(p)$.
\end{enumerate}
As a consequence, we define the notion of overlap as
\begin{definition}
    Let $p,q$ be real positive integrable functions, the notion of overlap is defined as 
    \begin{equation} \label{eq_overlap}
    o(p,q)= \frac{\int_{\text{supp}(p) \cap \text{supp}(q)} \min(p(x),q(x)) dx}{\int_{\text{supp}(p) \cup \text{supp}(q)} \max(p(x),q(x)) dx}\,,
\end{equation}
and given $k \in \mathbb{R}$ such that $k\geq 1$, we define the measure $d_k(p,q)$ as
    \begin{equation} \label{eq_diss}
        d_k(p,q)= k - o(p,q).
    \end{equation}
\end{definition} \noindent
Figure \ref{example_dk} provides a graphical example
of the computation of the above formula for several variables from the \textit{Diabetes} dataset (\citep{diabetes_dataset}). In particular, we use the \texttt{np.trapz} method from the \texttt{numpy} library, which applies the trapezoidal rule to numerically approximate the value of the integral. Equation \eqref{eq_overlap} is actually a generalization of \textit{Jaccard distance}, a common way to measure dissimilarity between sets.
\subsubsection{Mathematical and geometrical properties of $d_1$}
In this section, we explore a set of properties and results that will help clarify the meaning of the overlap measure and provide the foundation for the main theoretical results of this paper: Theorems \ref{theoremdiss} and \ref{distance_theorem}. The discussion will be done for $k=1$ as it is the most relevant; however, all the results can be immediately adapted to $d_k(p,q)$. First of all, note that since $p(x), q(x)\geq 0$ and 
\begin{equation*} \text{supp}(p) \cap \text{supp}(q) \subset \text{supp}(p) \cup \text{supp}(q),
\end{equation*}
we can conclude that
\begin{equation*}
0 \leq o(p,q) \leq 1
\end{equation*}
which implies that 
\begin{equation*}
 0\leq d_1(p,q)\leq 1.
\end{equation*}
If $p=q$ then $d_1(p,q)=0$ and if the two density estimations share a $0$-measure support i.e., $\mu(\text{supp}(p) \cap \text{supp}(q))=0$ for a given measure $\mu$, we obtain that $d_1(p,q)=1$. Furthermore, we have a monotonicity property that takes into account densities: if $p(x),g(x),q(x)$ are three density estimations sharing the same support with the property that, almost everywhere,
\begin{equation*}
p(x)\leq g(x) \leq q(x)  \implies d_1(p,q)\geq d_1(g,q).
\end{equation*} 
It is important to underline that, since we are working with integrals, we have to consider statements as $p<q$ or $p\neq q$ almost everywhere. More properly, equations \eqref{eq_overlap} and \eqref{eq_diss} are defined on 
\begin{equation*}
L^1(\mathbb{R}) \times L^1(\mathbb{R}) = \frac{\mathcal{L}^1(\mathbb{R}) \times \mathcal{L}^1 (\mathbb{R})}{\sim_{\mu}}
\end{equation*}
which is the quotient space of $\mathcal{L}^1(\mathbb{R}) \times  \mathcal{L}^1 (\mathbb{R})$ with the equivalence relation induced by $\mu$.
Finally, we are left to show that $d_1(p, q)$ satisfies property $3$ in Section \ref{sec:overlap_sect}.
\begin{proposition} \label{penalization}
Let $p,q$ be two real probability distributions, $\text{supp}(p)=[a,b]$, $\text{supp}(q)=[a,c]$ such that $b<c$. Moreover, assume that
\begin{equation*}
|p(x)-q(x)|< \delta \quad\,\, \text{for}\,\, x \in [a,b], \qquad \int_b^c q(x) = \epsilon. 
\end{equation*}
Then $o(p,q) \to 1$, and so $d_1(p,q) \to 0$, as $\epsilon, \delta \to 0$.
\end{proposition}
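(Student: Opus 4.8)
The plan is to compute $o(p,q)$ explicitly from the given support structure and then trap it between $1$ and a lower bound that tends to $1$. First I would determine the intersection and union of supports: since $b < c$, we have $\text{supp}(p) \cap \text{supp}(q) = [a,b]$ and $\text{supp}(p) \cup \text{supp}(q) = [a,c]$. Substituting into \eqref{eq_overlap} yields
\[
o(p,q) = \frac{\int_a^b \min(p(x),q(x))\,dx}{\int_a^c \max(p(x),q(x))\,dx}.
\]

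Next I would split the denominator over $[a,b]$ and $[b,c]$. Since $p$ vanishes on $[b,c]$, there $\max(p,q) = q$, so the tail integral equals exactly $\int_b^c q(x)\,dx = \epsilon$. Abbreviating $A = \int_a^b \min(p,q)\,dx$ and $B = \int_a^b \max(p,q)\,dx$, the overlap becomes $o(p,q) = A/(B + \epsilon)$. The hypothesis $|p-q| < \delta$ on $[a,b]$ then controls the gap between $A$ and $B$: pointwise $\max(p,q) - \min(p,q) = |p-q|$, so $B - A = \int_a^b |p-q|\,dx \le \delta(b-a)$, giving $B \le A + \delta(b-a)$. Using also the general bound $0 \le o \le 1$ established earlier, I obtain
\[
1 \ge o(p,q) = \frac{A}{B + \epsilon} \ge \frac{A}{A + \delta(b-a) + \epsilon}.
\]

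The one subtlety — and the step I expect to be the main obstacle — is to ensure $A$ stays bounded away from $0$, so that the final ratio genuinely converges to $1$ rather than producing a $0/0$ indeterminacy as both $A$ and the perturbation terms could in principle shrink. This is where the probability normalization enters. From the elementary pointwise inequality $\min(p,q) \ge q - |p-q|$ and the fact that $\int_a^b q\,dx = 1 - \epsilon$ (because $q$ integrates to $1$ on $[a,c]$ and contributes only $\epsilon$ on $[b,c]$), I get $A \ge (1-\epsilon) - \delta(b-a)$, which tends to $1$. Combining this lower bound with the displayed inequality and letting $\epsilon, \delta \to 0$ forces $o(p,q) \to 1$, and therefore $d_1(p,q) = 1 - o(p,q) \to 0$, as claimed. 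The rest is routine: the inequalities above are all that is needed, so no delicate estimates or convergence arguments beyond this normalization bound are required.
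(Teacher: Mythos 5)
Your proposal is correct and follows essentially the same route as the paper's proof: both split the denominator of $o(p,q)$ into $\int_a^b \max(p,q)\,dx + \epsilon$, use the hypothesis $|p-q|<\delta$ together with the normalization $\int_a^b q\,dx = 1-\epsilon$ to sandwich the ratio, and let $\epsilon,\delta \to 0$. Your explicit lower bound on $A = \int_a^b \min(p,q)\,dx$ to rule out a $0/0$ degeneracy is a slightly cleaner packaging of the same estimates the paper obtains via $q-\delta \le \min(p,q),\max(p,q) \le q+\delta$, so there is no substantive difference.
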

\begin{proof}
    We leave the proof for Appendix \ref{appendix:1}.
\end{proof}
\noindent
Several density functions are defined on $\mathbb{R}$, even though they are concentrated on a small region as in the case of $N(\sigma,\mu^2)$ and $\Gamma(\alpha,\beta)$ (a list of Kernel functions can be found in \citep{kde}). Assume now that $p(x), q(x)$ are two densities defined on $\mathbb{R}$, then we can concentrate our overlap measure on a finite proper interval $D$ without any relevant information loss. This is a result of the continuity of $d_1(p,q)$, Proposition \ref{penalization} and the following remark, for which we leave additional details in the Appendix \ref{appendix:2}:
\begin{remark} \label{trivial_remark}
Let $f \in L^1(\mathbb{R})$, then $\forall \epsilon >0$, $\exists\, a \geq 0$ such that
\begin{equation*}
\int_ a^\infty |f(x)|\,dx <\epsilon, \quad \int_ {-\infty}^{-a} |f(x)|\,dx<\epsilon.
\end{equation*}
\end{remark}

Let us now provide a numerical example of computation of $d_1(p,q)$.
\begin{example}
Assume the following density functions
\begin{equation*}
p(x)= \begin{cases}
    \frac{1}{2}   & \text{if $x \in [0,2] $}\\
    0 & \text{otherwise}
\end{cases}, \hspace{1.5mm}
q(x)= \begin{cases}
    \frac{19}{40}   & \text{if $x \in [0,2] $}\\
     \frac{1}{60}   & \text{if $x \in (2,5] $}\\
    0 & \text{otherwise}.
\end{cases} 
\end{equation*}
In this case, the overlap value is $19/20$ and so \begin{equation*}  d_1(p,q)= 1-\frac{19/20}{1+1/20}=\frac{2}{21} \approx 0.095, \end{equation*} indicating little dissimilarity between the two densities. 
\end{example}  
\noindent
We prove the following Proposition and Theorem in Appendix \ref{appendix:3}.
\begin{proposition} \label{positivity}
Given $p,q$ real positive integrable functions such that $p\neq q$, then $d_k(p,q)> k-1$.
\end{proposition}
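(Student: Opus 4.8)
The plan is to reduce the claim to a statement about the overlap alone. Since by definition $d_k(p,q) = k - o(p,q)$, the inequality $d_k(p,q) > k-1$ is equivalent to $o(p,q) < 1$. The bound $0 \le o(p,q) \le 1$ has already been established in the excerpt, so the entire task is to \emph{sharpen} the upper bound to a strict inequality under the hypothesis $p \neq q$, understood (per the $L^1$ quotient discussion above) as $p \neq q$ on a set of positive measure.

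First I would rewrite both the numerator and the denominator of \eqref{eq_overlap} as integrals over all of $\mathbb{R}$. At any point $x \notin \text{supp}(p) \cap \text{supp}(q)$, at least one of $p(x), q(x)$ vanishes; since both functions are nonnegative, $\min(p(x),q(x)) = 0$ there, so $\int_{\text{supp}(p) \cap \text{supp}(q)} \min(p,q)\,dx = \int_{\mathbb{R}} \min(p,q)\,dx$. Symmetrically, at any $x \notin \text{supp}(p) \cup \text{supp}(q)$ both functions vanish, so $\max(p(x),q(x)) = 0$ and $\int_{\text{supp}(p) \cup \text{supp}(q)} \max(p,q)\,dx = \int_{\mathbb{R}} \max(p,q)\,dx$. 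This yields the clean form $o(p,q) = \left(\int_{\mathbb{R}} \min(p,q)\,dx\right) \big/ \left(\int_{\mathbb{R}} \max(p,q)\,dx\right)$.

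Next I would invoke the pointwise identity $\max(p,q) - \min(p,q) = |p-q|$. Integrating over $\mathbb{R}$, the difference between denominator and numerator is exactly $\int_{\mathbb{R}} |p-q|\,dx$, which is strictly positive precisely because $p \neq q$ on a set of positive measure. Moreover $p$ and $q$ are not both zero almost everywhere (otherwise $p = q$, contradicting the hypothesis), so the denominator satisfies $\int_{\mathbb{R}} \max(p,q)\,dx \ge \int_{\mathbb{R}} p\,dx > 0$ (or the analogous bound with $q$); hence the fraction is well-defined and the strict inequality numerator $<$ denominator forces $o(p,q) < 1$. Substituting back into $d_k(p,q) = k - o(p,q)$ gives $d_k(p,q) > k-1$, as desired.

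There is no genuinely hard step here; the only delicate point is the measure-theoretic bookkeeping. Specifically, I must justify carefully that the support-restricted integrals coincide with the full-line integrals, so that the $\min$/$\max$ cancellation into $|p-q|$ is legitimate, and I must be precise that ``$p \neq q$'' is meant in the $L^1$ quotient sense, which is exactly what upgrades $\int_{\mathbb{R}} |p-q|\,dx \ge 0$ to the strict $\int_{\mathbb{R}} |p-q|\,dx > 0$ that the argument requires.
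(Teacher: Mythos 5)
Your proof is correct, but it follows a genuinely different route from the paper's. The paper argues by cases on the supports: if $\text{supp}(p) \neq \text{supp}(q)$, it asserts that the strict containment $\text{supp}(p)\cap\text{supp}(q) \subsetneq \text{supp}(p)\cup\text{supp}(q)$ forces $o(p,q)<1$; if instead $\text{supp}(p)=\text{supp}(q)=D$, it uses that $p\neq q$ makes one of $A=\{x \in D \mid p(x)<q(x)\}$, $B=\{x\in D \mid q(x)<p(x)\}$ have positive measure, and then compares numerator and denominator through the decomposition $\int_A p + \int_{D\setminus A} q$ versus $\int_A q + \int_{D\setminus A} p$. You avoid case analysis entirely: since $\min(p,q)$ vanishes off $\text{supp}(p)\cap\text{supp}(q)$ and $\max(p,q)$ vanishes off $\text{supp}(p)\cup\text{supp}(q)$, you rewrite $o(p,q)$ as a ratio of integrals over all of $\mathbb{R}$, and the pointwise identity $\max(p,q)-\min(p,q)=|p-q|$ reduces strictness to $\int_{\mathbb{R}}|p-q|\,dx>0$, which is exactly the hypothesis $p\neq q$ in the $L^1$ sense. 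Your argument buys uniformity and a quantitative refinement, namely $1-o(p,q)=\bigl(\int_{\mathbb{R}}|p-q|\,dx\bigr)/\bigl(\int_{\mathbb{R}}\max(p,q)\,dx\bigr)$; it also sidesteps a small looseness in the paper's first case, where strict containment of supports is taken to imply a strict inequality of integrals without noting that the set difference must carry positive mass (true for essential supports, but requiring an extra argument that your rewriting renders unnecessary). The paper's proof, in exchange, is more elementary and makes the geometric role of the supports and of the sets where $p$ and $q$ cross fully explicit.
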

\begin{theorem} \label{theoremdiss}
    The measure $d_k(p,q)$ is a metric dissimilarity\footnote{The formal definition of dissimilarity can be found in the Appendix. Essentially, it is related to the concept of metric on a set, but with weaker requirements.} measure on $\Omega$ for $k\geq 2$, where
    \begin{equation*}
    \Omega = \{f \in  L^1(\mathbb{R})\,\,|\,\, \text{supp}(f) \neq \emptyset\,\, \text{and}\,\, f(x)\geq 0 \}.
    \end{equation*}
\end{theorem}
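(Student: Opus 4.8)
The plan is to verify, one at a time, the defining conditions of a metric dissimilarity: symmetry, non-negativity, minimality (definiteness) of the self-dissimilarity value, and the triangle inequality. Unlike a genuine metric, a dissimilarity does not require $d_k(p,p)=0$, only that the self-value be the strict minimum, which is exactly what the weaker axioms demand. Throughout I work in the quotient space $L^1(\mathbb{R})$ so that all (in)equalities between functions are read almost everywhere, making the identity condition $p=q$ well posed.

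Symmetry is immediate: the numerator of $o(p,q)$ integrates $\min(p,q)$ over $\text{supp}(p)\cap\text{supp}(q)$ and the denominator integrates $\max(p,q)$ over $\text{supp}(p)\cup\text{supp}(q)$, both invariant under swapping the two arguments; hence $o(p,q)=o(q,p)$ and so $d_k(p,q)=d_k(q,p)$. Non-negativity follows from the already-established bound $0\le o(p,q)\le 1$: for $k\ge 2$ we obtain $d_k(p,q)=k-o(p,q)\ge k-1\ge 1>0$. For definiteness, observe that $d_k(p,p)=k-o(p,p)=k-1$, while Proposition~\ref{positivity} gives $d_k(p,q)>k-1$ whenever $p\neq q$; therefore $d_k(p,p)$ is the strict global minimum of $d_k$, attained precisely at $p=q$.

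The remaining and conceptually central step is the triangle inequality $d_k(p,r)\le d_k(p,q)+d_k(q,r)$ for all $p,q,r\in\Omega$. Substituting the definition of $d_k$ and cancelling, this is equivalent to $o(p,q)+o(q,r)-o(p,r)\le k$. Since each overlap value lies in $[0,1]$ and $o(p,r)\ge 0$, the left-hand side is bounded above by $o(p,q)+o(q,r)\le 2$, so the inequality holds for every $k\ge 2$. This is exactly why the hypothesis $k\ge 2$ is required in the statement.

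I expect the triangle inequality to be the step that looks like the main obstacle, but the crude bound above dissolves it: the large additive constant $k$ absorbs the worst case $o(p,q)+o(q,r)=2$, so no delicate geometric argument about how the three supports $\text{supp}(p)$, $\text{supp}(q)$, $\text{supp}(r)$ interact is needed. The genuine mathematical content therefore resides in the earlier results --- the bounds $0\le o(p,q)\le 1$ and the strict positivity of Proposition~\ref{positivity} --- rather than in the routine verification sketched here.
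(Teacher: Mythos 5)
Your proof is correct and follows essentially the same route as the paper's: symmetry by inspection, the lower bound $d_0=k-1$ attained exactly when $p=q$ via Proposition~\ref{positivity}, and the triangle inequality reduced to $o(p,q)+o(q,r)-o(p,r)\leq k$, which the bound $0\leq o\leq 1$ settles for $k\geq 2$. No meaningful difference from the paper's argument.
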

Our focus is the case $k=1$, for which we can obtain a stronger result for $d_1(p,q)$. In short, invoking previous results, using the fact that the \textit{Jaccard distance} satisfies the triangle inequality and by dominated convergence, we can obtain (see Appendix \ref{appendix:4}) the following central Theorem, which also generalize the previous result:
\begin{theorem} \label{distance_theorem}
 $d_1(p,q)$ is a metric on $\Omega$. Moreover, $d_k(p,q)$ is a dissimilarity measure on $\Omega$ for $k\in [1,2)$.
\end{theorem}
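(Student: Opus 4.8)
The plan is to verify the four metric axioms for $d_1$ on $\Omega$, isolating the triangle inequality as the single nontrivial one. Non-negativity and the bound $d_1(p,q)\in[0,1]$ were already obtained from $0\le o(p,q)\le 1$. Symmetry is immediate: swapping $p$ and $q$ leaves $\min(p,q)$, $\max(p,q)$ and the sets $\text{supp}(p)\cap\text{supp}(q)$, $\text{supp}(p)\cup\text{supp}(q)$ unchanged, hence $o(p,q)=o(q,p)$ and $d_1(p,q)=d_1(q,p)$. For the identity of indiscernibles, one direction ($p=q\Rightarrow d_1(p,q)=0$) was noted earlier, while the converse is exactly Proposition~\ref{positivity} specialized to $k=1$, which gives $d_1(p,q)>0$ whenever $p\neq q$ in $L^1(\mathbb{R})$ (i.e. $p\neq q$ on a set of positive measure). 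Thus $d_1(p,q)=0\iff p=q$.

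Everything therefore reduces to the triangle inequality $d_1(p,r)\le d_1(p,q)+d_1(q,r)$. First I would rewrite the overlap as an integral over all of $\mathbb{R}$: since $\min(p,q)$ vanishes off $\text{supp}(p)\cap\text{supp}(q)$ and $\max(p,q)$ vanishes off $\text{supp}(p)\cup\text{supp}(q)$, we have $o(p,q)=\bigl(\int_{\mathbb{R}}\min(p,q)\bigr)/\bigl(\int_{\mathbb{R}}\max(p,q)\bigr)$, the continuous (weighted) analogue of the Jaccard index; hence $d_1=1-o$ is the continuous weighted Jaccard distance, and the claim is equivalent to the scalar inequality $o(p,q)+o(q,r)\le 1+o(p,r)$.

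The key step, and the main obstacle, is establishing this inequality in the continuous setting, which I would transfer from the finite set-theoretic Jaccard distance, a classical metric. Concretely, I would approximate the nonnegative functions $p,q,r\in L^1(\mathbb{R})$ from below by nonnegative simple functions $p_n\uparrow p$, $q_n\uparrow q$, $r_n\uparrow r$ built over a common finite partition of their essential supports into level sets. On such simple functions the integrals $\int\min$ and $\int\max$ collapse into finite weighted sums over the cells of the partition, so $d_1$ restricted to them coincides with the weighted (multiset) Jaccard distance, for which the triangle inequality is known to hold. The remaining work is the limiting argument: since $\min(p_n,q_n)\to\min(p,q)$ and $\max(p_n,q_n)\to\max(p,q)$ pointwise and are dominated by $p+q+r\in L^1(\mathbb{R})$, dominated convergence yields convergence of both numerators and denominators, whence $o(p_n,q_n)\to o(p,q)$ for each pair. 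Care must be taken that the denominators $\int\max$ stay bounded away from $0$; this is guaranteed because every $f\in\Omega$ has $\text{supp}(f)\neq\emptyset$, so for $n$ large the approximating integrals are positive. Passing to the limit in the triangle inequality for $(p_n,q_n,r_n)$ then gives $o(p,q)+o(q,r)\le 1+o(p,r)$, completing the proof that $d_1$ is a metric on $\Omega$.

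Finally, for the dissimilarity claim when $k\in[1,2)$, I would use the shift relation $d_k(p,q)=d_1(p,q)+(k-1)$. Non-negativity and symmetry are inherited from $d_1$ together with the constant $k-1\ge 0$. The triangle-type inequality follows from the sharp bound just proved: $o(p,q)+o(q,r)-o(p,r)\le 1\le k$, which is precisely $d_k(p,r)\le d_k(p,q)+d_k(q,r)$ after expanding $d_k=k-o$. This sharpened bound ($\le 1$ in place of the trivial $\le 2$ used in Theorem~\ref{theoremdiss}) is exactly what extends the dissimilarity property down to the range $[1,2)$. Note that $d_k$ is \emph{not} a metric for $k>1$, since $d_k(p,p)=k-1\neq 0$ violates the identity of indiscernibles, which is why only the weaker dissimilarity conclusion is available there.
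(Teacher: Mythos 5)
Your proposal is correct and follows essentially the same route as the paper's proof: reduce the triangle inequality to simple functions, invoke a Jaccard-type triangle inequality there, pass to general $L^1$ functions by dominated convergence, handle the remaining axioms via Proposition~\ref{positivity} and symmetry, and obtain the $k\in[1,2)$ claim from the sharp bound $o(p,q)+o(q,r)-o(p,r)\le 1\le k$. The only difference is where the known result enters: you cite the weighted (multiset) Jaccard triangle inequality directly at the simple-function level, whereas the paper derives that case from the set-theoretic Jaccard metric through an explicit intermediate computation for scaled indicators $\alpha\mathbf{1}_A,\beta\mathbf{1}_B,\gamma\mathbf{1}_C$ — your shortcut is legitimate (that weighted inequality is indeed a known theorem) and, if anything, rests on firmer ground than the paper's intermediate step.
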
 \noindent
As a consequence, Theorems \ref{theoremdiss} and \ref{distance_theorem} convey that formula defined in \eqref{eq_diss} represents a way of calculating in mathematical terms the dissimilarity between two probability distributions. Moreover, Theorem \ref{distance_theorem} establishes that $d_1(p,q)$ satisfies the axioms of a distance: non-negativity, symmetry, the triangle inequality, and that $d_1(p,q)=0$ if and only if $p=q$ (almost everywhere). This result is significant because it implies that $d_1$ induces a metric topology on the space $\Omega$. This topology provides a rigorous framework for comparing probability distributions. From a topological perspective, the metric $d_1$ ensures that the space $\Omega$ is metrizable. For instance, $d_1$ can be used to define neighborhoods of probability distributions, facilitating the study of their stability, convergence, or variation under perturbations. 

\section{Experimental results} \label{exp_section}
To demonstrate the applicability of the proposed framework, we conduct experiments on numerical features from two datasets: \textit{Diabetes} and \textit{Heart Disease} (\citep{diabetes_dataset},\citep{heart_disease}). To showcase the effectiveness of CID, we deliberately adopted standard techniques (alternative settings are discussed in the Appendix \ref{appendix:5}): Gaussian Kernel with Silverman's rule for the bandwidth to compute the (approximate) distributions of $C^+$ and $C^-$; on the other hand, counterfactual explanations are computed using the Diverse Counterfactual Explanations (DiCE) library (\citep{Diff_expl}), which some authors regard as a benchmark for evaluating feature importance (\citep{DICE_benchmark,FI_counterfactuals2}). We employ this library as it allows to generate positive and negative CFs for any model. We briefly recall that a generated point is classified as a positive CF if it crosses the decision boundary and changes the classifier output; otherwise, it is labeled as a negative CF. In the experiments we generate $50$ elements for each set $C^+$ and $C^-$.
We take inspiration from \citep{correlation} and compare our results (CID) with local feature importance scores from DiCE. In order to provide a complete picture, we also compute SHAP values (\citep{SHAP}) and LIME values (\citep{LIME}) for the analyzed variables.

\subsection{Dissimilarity analysis on datasets}
Figure \ref{example_dk} shows a graphical illustration for the \textit{Diabetes} dataset of the computation of $d_1(p,q)$ from the two sets of CFs (positive and negative). We recall that $d_1(p,q)$ conveys how different two distributions are: a higher $d_1(p,q)$ indicates that the corresponding feature assumes substantially different values in positive and negative counterfactuals, meaning that changes in this feature are more strongly associated with changes in the model output.
In addition to provide an illustrative comparison between CID and DiCE, both relying on generation CFs, we computed the value of CID and DiCE for the first entry (for reproducibility) of \textit{Diabetes} and \textit{Heart} datasets. In Figure \ref{Distr_figure} we compare the distributions of the obtained values, replicated $100$ times, for each feature. In the case of CID, we can see a higher variability compared to DiCE, reflecting the variance from the CFs generation process, a know issue in feature importance methods \citep{agree/disagree,duan2024evaluation,alvarez2018robustness,ratul2021evaluating}. 

\begin{figure}[t]
    \centering
    \includegraphics[width=\columnwidth]{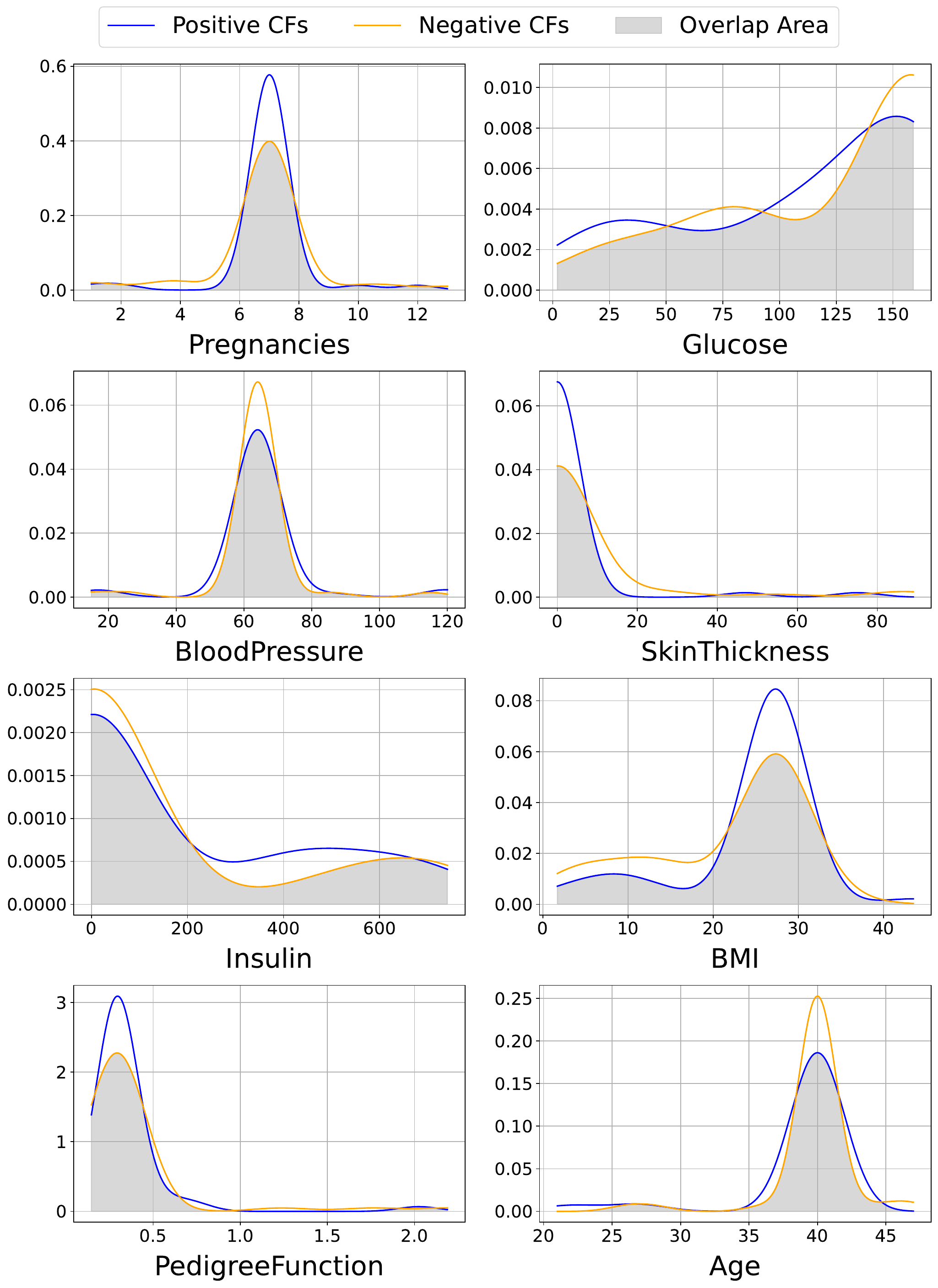}
    \caption{Dissimilarity analysis for the \textit{Diabetes} dataset. KDEs of CF entries from $C^+$ (blue) and $C^-$ (orange) for the first test sample; the grey area indicates their overlap. This visualization highlights how feature distributions differ between positive and negative CFs. For instance, SkinThickness and PedigreeFunction show noticeable shifts, while variables like Insulin and Glucose exhibit no clear pattern.}
    \label{example_dk}
\end{figure}

\begin{figure}[t]
    \centering
    \includegraphics[width=\columnwidth]{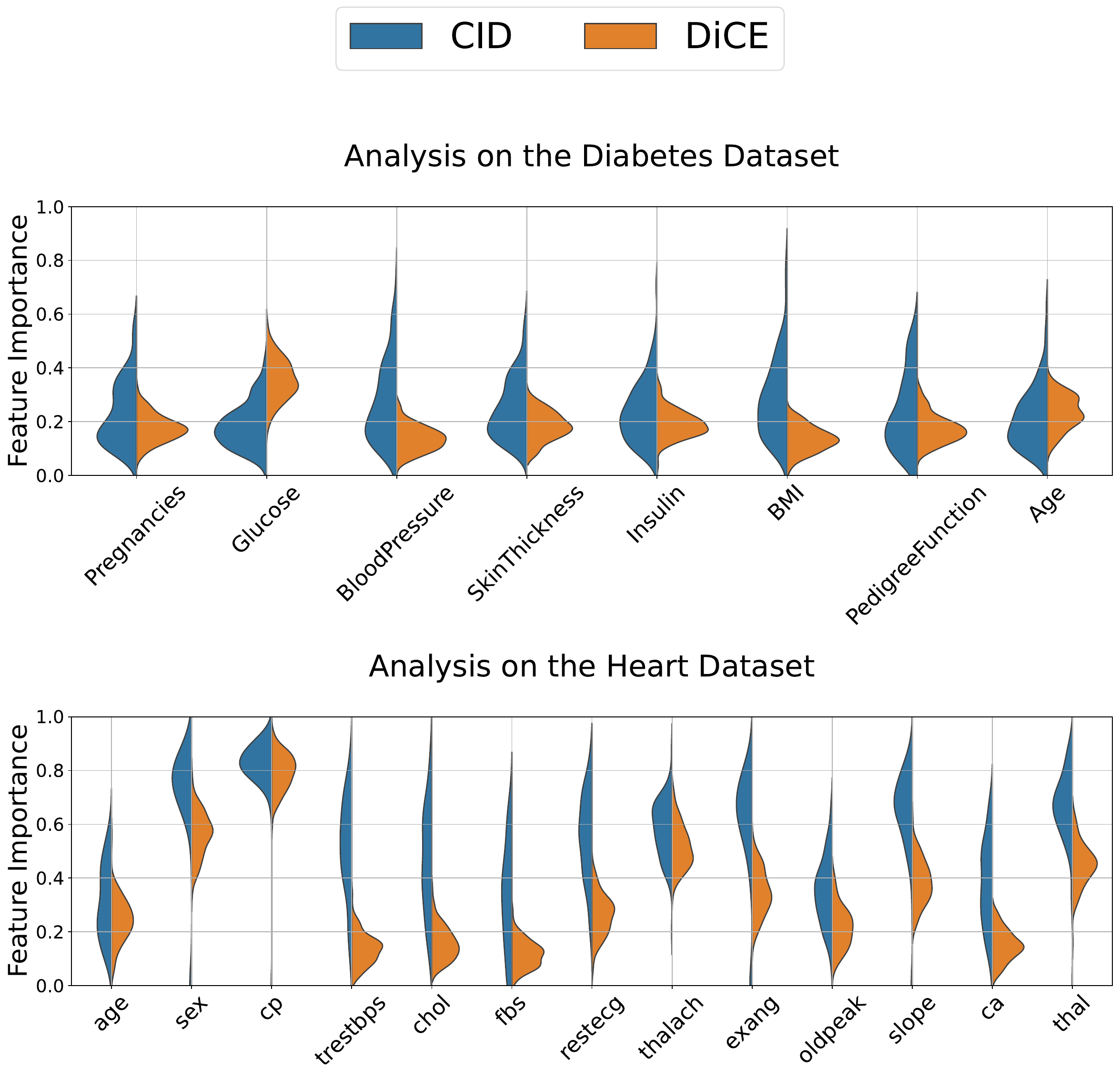}
    \caption{Distribution of feature importance values over $100$ iterations, according to CID and DiCE, for the first test entry of the \textit{Diabetes} and \textit{Heart} datasets, respectively. We employed, respectively, LogisticRegression and RandomForestClassifier as models.}
    \label{Distr_figure}
\end{figure}

\subsection{Comparison with other measures} \label{comp_section}
In this section, we compare CID with DiCE, SHAP and LIME. Moreover, to mitigate the randomness of CFs, we repeat the analysis $10$ times, averaging the results to compute the dissimilarity measure $d_1(p,q)$ and the DiCE local feature importance. Next, we aggregate the results across the entire test set for each dataset to ensure a complete comparison across the metrics. In the case of \textit{Diabetes} we use LogisticRegression, whereas for Heart Disease we use RandomForestClassifier. It is important to emphasize that the same analyses could be conducted with other types of classifiers, as all the measures we use are model-agnostic.

To compare these methods, we use \textit{Feature Agreement} (\citep{agree/disagree}) with $k=4$ 
to evaluate the agreement between methods when selecting the top-k features according to their attribution values (here $k$ refers to the features, and no confusion should arise with $k$ in \eqref{eq_diss} since we are using $d_1$). More precisely, given two explanations (i.e., vectors consisting of feature importance values) $E_a$ and $E_b$, Feature Agreement is formulated as:
\begin{equation} \label{feat_agreem}
    \frac{|TF(E_a,k) \cap TF(E_b,k)|}{k}
\end{equation}
where $TF(E,k)$ returns the set of top-k features of an explanation $E$ based on the magnitude of their feature importance values. Formula \eqref{feat_agreem} computes the fraction of common features between the sets of top-k features of two explanations.

\begin{table}[t]
\caption{Feature Agreement Matrix for the \textit{Diabetes} dataset. Values highlight the lower alignment of CID with the other metrics, hence suggesting different dimensions as relevant.}
\centering
\resizebox{\linewidth}{!}{%
\begin{tabular}{lcccc}
\hline
        & DiCE & SHAP & LIME & CID (ours) \\
\hline
DiCE    & $1.00 \pm 0.00$ & $0.67 \pm 0.03$ & $0.71 \pm 0.03$ & $0.46 \pm 0.04$ \\
SHAP    & $0.67 \pm 0.03$ & $1.00 \pm 0.00$ & $0.84 \pm 0.02$ & $0.57 \pm 0.03$ \\
LIME    & $0.71 \pm 0.03$ & $0.84 \pm 0.02$ & $1.00 \pm 0.00$ & $0.53 \pm 0.03$ \\
CID (ours)   & $0.46 \pm 0.04$ & $0.57 \pm 0.03$ & $0.53 \pm 0.03$ & $1.00 \pm 0.00$ \\
\hline
\end{tabular}}
\label{diabetes_agreem}
\end{table}

\begin{table}[t]
\caption{Feature Agreement Matrix for the \textit{Heart} dataset. In this table, CID shows an even more pronounced misalignment in the features suggested as relevant compared to the other metrics.}
\centering
\resizebox{\linewidth}{!}{%
\begin{tabular}{lcccc}
\hline
 & DiCE & SHAP & LIME & CID (ours) \\
\hline
DiCE & $1.00 \pm 0.00$ & $0.53 \pm 0.03$ & $0.71 \pm 0.02$ & $0.35 \pm 0.05$ \\
SHAP & $0.53 \pm 0.03$ & $1.00 \pm 0.00$ & $0.69 \pm 0.03$ & $0.33 \pm 0.03$ \\
LIME & $0.71 \pm 0.02$ & $0.69 \pm 0.03$ & $1.00 \pm 0.00$ & $0.35 \pm 0.04$ \\
CID (ours)  & $0.35 \pm 0.05$ & $0.33 \pm 0.03$ & $0.35 \pm 0.04$ & $1.00 \pm 0.00$ \\
\hline
\end{tabular}}
\label{heart_agreem}
\end{table}

The results in Tables \ref{diabetes_agreem} and \ref{heart_agreem} show the aggregation of \eqref{feat_agreem} for the whole test set. We report the mean with the confidence interval, computed as $\pm 2\sigma/\sqrt{n}$. From the tables, we can observe how CID is in lower agreement with the other metrics, which in turn are more aligned with themselves. This aspect is even more pronounced in the \textit{Heart} dataset. As a consequence, considering the top features, CID provides \textit{complementary} explanations of the model (focusing on different dimensions of the data), while DiCE, SHAP and LIME tend to highlight the same features with high consistency. This observed discrepancy aligns with the literature on feature importance (\citep{inconsistency}, \citep{general_fi}, \citep{faithful_interp}), which highlights the lack of ground truth in feature importance attribution. This raises the need to evaluate and quantify the quality of explanations provided by these techniques. In the following section we address this aspect by employing faithfulness metrics.

\subsection{Evaluation of Faithfulness metrics} \label{faith_section}
The results obtained by using the Feature Agreement measure \eqref{feat_agreem} help understand whether there are differences in stating which features are more important. To assess the explanatory value of these methods, We introduce faithfulness metrics, namely \textit{sufficiency} and \textit{comprehensiveness}, initially proposed by \citep{faith_metrics} and further discussed in \citep{faithfulness_1} and \citep{faithfulness_2}. These metrics evaluate how well the identified features contribute to the model's decision process. These metrics provide a robust framework for understanding the faithfulness of explanations. As we show in this section, CID achieves significantly better results in terms of faithfulness in our experiments compared to the other approaches. 
The key intuition behind these metrics is that altering an important feature should significantly impact the model's prediction and the magnitude of this impact reflects the quality of the explanation.

Let us now formalize \textit{comprehensiveness} and \textit{sufficiency} metrics. Given an input $x=(x_1,\ldots, x_d)$ and an explanation $e=(e_1,\ldots, e_d)$ where $d$ denotes the number of features, if we denote $\tilde{x}_e^{(l)}$ the input with $l$ most important features \textit{removed} according to $e$, comprehensiveness is defined as 
\begin{equation} \label{compr}
    k(x,e) = \frac{1}{d+1} \sum_{l=0}^d f(x)-f(\tilde{x}_e^{(l)}),
\end{equation}
where $f$ is the function we want to explain. In simple terms, comprehensiveness measures how much the model prediction deviates from its original value when important features are removed sequentially (larger values indicate better explanations). If we denote $\hat{x}_e^{(l)}$ the input with the $l$ most important features present, sufficiency is defined as 
\begin{equation} \label{suff}
    \sigma(x,e) = \frac{1}{d+1} \sum_{l=0}^d f(x)-f(\hat{x}_e^{(l)})
\end{equation}
and it measures the gap to the original model prediction that remains when features are successively inserted from the most important to the least. Here, a smaller value is desirable.

In our experiments, we compute \eqref{compr} and \eqref{suff} using 
\begin{equation*}
    f(x)= P(y=M(x)|x),
\end{equation*}
where $M(x)$ represents the predicted target for the model for the particular instance $x$, again reporting the mean value and the confidence interval as done in Section \ref{comp_section}. The explanations are generated using CID, DiCE, SHAP and LIME. 
In order to obtain $\tilde{x}_e^{(l)}$ and $\hat{x}_e^{(l)}$, where we \textit{remove} information, we mask selected features by replacing them with the mean value of that dimension (\citep{mask_value}). 
As an example, Figure~\ref{k_trend_diab} shows the evolution of the predicted probability when we progressively mask the input according to the explainers used. This figure is intended as an illustrative example to show the general trend of comprehensiveness. Non-monotonic variations in the predicted probability may arise from feature interactions, correlations, or model non-linearities. The quantitative and statistically supported results are presented in the tabels below.
\begin{table}[!h]
\caption{Comprehensiveness results for \textit{Diabetes} and \textit{Heart} datasets (error bars with $\pm 2 \sigma/\sqrt{n}$). Higher values are preferable.}
\centering
\resizebox{\linewidth}{!}{%
\begin{tabular}{lcc}
\hline
\textbf{Method} & \textbf{Diabetes} $\uparrow$ & \textbf{Heart} $\uparrow$ \\
\hline
CID  & $\boldsymbol{0.5405 \pm 0.2559}$ & $\boldsymbol{1.8017 \pm 0.0965}$ \\
DiCE  & $0.0487 \pm 0.1250$ & $1.2355 \pm 0.0849$ \\
SHAP  & $0.1300 \pm 0.1423$ & $1.3689 \pm 0.0716$ \\
LIME  & $0.1277 \pm 0.1328$ & $\boldsymbol{1.8163 \pm 0.0815}$ \\
\hline
\end{tabular}}
\label{comprehensiveness_results}
\end{table}
\begin{table}[!h]
\caption{Sufficiency results for \textit{Diabetes} and \textit{Heart} datasets (error bars with $\pm 2 \sigma/\sqrt{n}$). Smaller values are preferable.}
\centering
\resizebox{\linewidth}{!}{%
\begin{tabular}{lcc}
\hline
\textbf{Method} & \textbf{Diabetes} $\downarrow$ & \textbf{Heart} $\downarrow$ \\
\hline
CID & $\boldsymbol{-0.1288 \pm 0.1444}$ & $\boldsymbol{2.2129 \pm 0.1357}$ \\
DiCE & $0.3942 \pm 0.2758$ & $2.7227 \pm 0.1166$ \\
SHAP & $0.3748 \pm 0.2797$ & $2.7044 \pm 0.1307$ \\
LIME & $0.3699 \pm 0.2799$ & $\boldsymbol{2.2662 \pm 0.1346}$ \\
\hline
\end{tabular}}
\label{sufficiency_results}
\end{table}

\begin{figure}[t]
    \centering
    \includegraphics[width=\columnwidth]{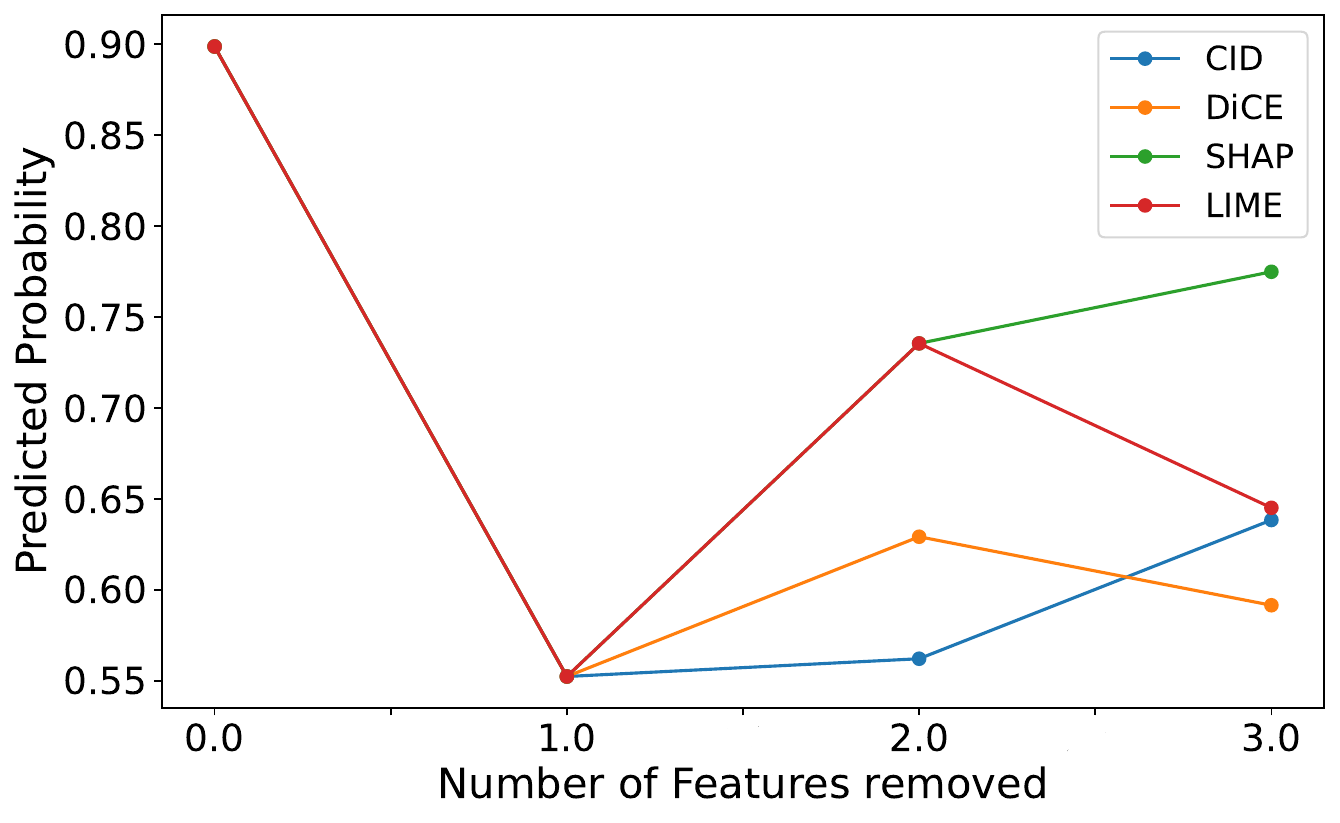}
    \caption{The trend of $k(x,e)$ when gradually removing the relevant features according to the different explanations in \textit{Diabetes} dataset for an instance of the test set. A downward trend indicates that features are being removed that actually played a relevant role in determining the class for the instance under consideration.}
    \label{k_trend_diab}
\end{figure}

In terms of comprehensiveness in  \autoref{comprehensiveness_results}, CID significantly outperforms the other three methods in the \textit{Diabetes} dataset, while both CID and LIME achieve the best results in the \textit{Heart} dataset. A similar situation can be seen for the sufficiency metrics (here smaller values are preferable) in \autoref{sufficiency_results}, where CID achieves the best results in \textit{Diabetes} and shares the best performance with LIME on the \textit{Heart} dataset.

Overall, even though LIME also demonstrates competitive performance, CID stands out as the more faithful method across both datasets. 

\section{Limitations and future work} \label{sec:limitation}
The formula \eqref{eq_diss} provides a method to quantify the dissimilarity between two data distributions, offering a novel perspective on how features contribute to a given output. This work represents an initial exploration of this dissimilarity framework, showing promising results in terms of faithfulness.
Importantly, this approach can naturally extend beyond binary classifiers. For instance, in the case of a continuous output space $\mathbb{R}$, it is sufficient to define a subset $A \in \mathbb{R}$ (this could be an interval around the factual sample) where $C^+$ represents the data points for which $f(x) \in A$, and $C^-$ corresponds to those with $f(x) \in A^c$.  
In our analysis of feature importance using this framework, we assumed feature independence, a common simplification in mathematical modeling \citep{Counterfactuals, IID}. However, this assumption stems from the generation process adopted by DiCE with random sampling, which treats features independently by design. Importantly, our method is general: if the generation of $C^+$ and $C^-$ were to incorporate feature dependencies, then the marginal distributions analyzed by our metric $d_k$ would naturally capture such correlations. Lastly, the masking method we adopted in the evaluation of faithfulness may lead to biased estimations.

Several open questions remain. Future studies could explore the impact of dependencies among features or extend this framework to multivariate settings and categorical variables. While one-hot encoding would technically enable the application of our method, KDE struggles with the limited variability of categorical features, which leads to volatile and less representative analyses. 
Moreover, CID is dependent on the CF generation mechanism of DiCE and the KDE estimations; this represents a promising direction for future work, as our framework will benefit from improved CF generation methods and more accurate distribution estimations.

\section{Applicability of CID and Computational Cost}\label{sec:applicability}
In Section \ref{exp_section}, we demonstrated the effectiveness of CID, which achieves competitive results in terms of both sufficiency and comprehensiveness. To emphasize the robustness of our approach, we deliberately adopted standard techniques: Gaussian kernel density estimation using Silverman's rule and DiCE with random sampling for generating counterfactuals. This was done to highlight the strength of our framework even under simple, baseline conditions.

It is important to underline, however, the \textit{flexibility} of our method. At the core of CID lies the $d_k$ metric, which serves as a bridge between the $C^+$ and $C^-$ distributions and the final feature importance scores. Therefore, the framework is modular: given any density estimation technique and a generative counterfactual method (capable of producing negative counterfactuals), CID can compute feature importance scores via $d_k$. In the Appendix \ref{appendix:5}, we conduct experiments on both datasets using various kernels and counterfactual generation methods. Overall, the results show that the standard setting offers a balanced trade-off: better comprehensiveness, comparable sufficiency, and lower computational cost than alternative counterfactual generation methods.

To analyze the computational cost, let $D$ be the dimensionality of the dataset and $m$ the size of each counterfactual set $C^+$ and $C^-$. 
For a given input $x$, the method generates two counterfactual sets of size $m$ each, then estimates the densities $p_i$ and $q_i$ for each feature $i$ independently (e.g., using a Gaussian kernel density estimator). 
Finally, the distance $d_1$ between the two densities is computed by numerically integrating two terms via the trapezoidal rule with $n_{\text{grid}}$ discretization points. 
The integration step has cost $O(n_{\text{grid}} D)$ once the densities are available. 
The overall complexity can be expressed as three distinct operations—counterfactual generation ($2m$ points), density estimation ($D$ dimensions), $d_1$ computation ($O(n_{grid} D)$). From our empirical observations, the CF generation tends to dominate the runtime: changing the counterfactual generation method noticeably increased the computation time, while switching between different kernel estimators had negligible impact. 
Therefore, the framework can be optimized depending on the choice of hyperparameters, and the computation of $d_1$ does not appear to be a major bottleneck.

\section{Conclusions} \label{sec:conclusion}
In the field of model explainability, there is no single definitive approach that prevails universally. Adopting a plurality of perspectives allows for a more comprehensive understanding of the model's decision process. In the case of the $d_1$ metric introduced in \eqref{eq_diss}, Section \ref{comp_section} shows that CID captures complementary aspects to other well-established methods, which tend to agree more strongly with each other. Furthermore, when evaluating faithfulness, a critical measure of explanation correctness, Section \ref{faith_section} shows that the CID framework in standard setting generally outperforms the other methods in terms of comprehensiveness and sufficiency. Consequently, we believe that the method we have introduced represents a promising starting point for the study of local feature importance. It is well-founded in mathematical terms and offers valuable insights. Further research, such as extending it to multivariate settings, exploring other domains (image, text), will enhance the applicability of the $d_1$ metric and our framework.

\section*{Acknowledgments}

The research leading to these results has been supported by the Horizon Europe Programme under the AI4DEBUNK Project (https://www.ai4debunk.eu), grant agreement num. 101135757. Also, by project PLEC2023-010240 funded by MICIU/AEI/10.13039/501100011033 (CAPSUL-IA), and project PID2024-155745OB-I00 funded by Spanish MICIU/AEI/10.13039/501100011033 / FEDER, UE (IAFE). It has also been partially supported by the predoctoral grants FI-STEP (2025 STEP 00108) from the Research and University Department of the Generalitat de Catalunya co-funded by the "Fondo Social Europeo Plus", and JDC2022-050313-I funded by MCIN/AEI/10.13039/501100011033al by European Union NextGenerationEU/PRTR, and the “Generación D” initiative, Red.es, Ministerio para la Transformación Digital y de la Función Pública, for talent attraction (C005/24-ED CV1), funded by the European Union NextGenerationEU funds, through PRTR.

\printbibliography
\appendix
\section{Appendix} 

\subsection{Proof of Proposition \ref{penalization}} \label{appendix:1}
\textbf{Proposition} \textit{Let $p,q$ be two real probability distributions, $\text{supp}(p)=[a,b]$, $\text{supp}(q)=[a,c]$ such that $b<c$. Moreover, assume that
\begin{equation*}
|p(x)-q(x)|< \delta \quad\,\, \text{for}\,\, x \in [a,b], \qquad \int_b^c q(x) = \epsilon. 
\end{equation*}
Then $o(p,q) \to 1$, and so $d_1(p,q) \to 0$, as $\epsilon, \delta \to 0$.}

\begin{proof}
We prove this Proposition for the general $d_k(p,q)$, then it sufficient to set $k=1$ at the end. Under these assumptions,
\begin{equation*}
    \begin{aligned}
d_k(p,q) &= k- \frac{\int_a^b \min(p(x),q(x))dx}{\int_a^c \max(p(x),q(x))dx} \\ &=k- \frac{\int_a^b \min(p(x),q(x))dx}{\int_a^b \max(p(x),q(x))dx + \int_b^c q(x)dx } \\
& = k- \frac{\int_a^b \min(p(x),q(x))dx}{\int_a^b \max(p(x),q(x))dx + \epsilon }. \\
\end{aligned}
\end{equation*}
Now, from the last equation, we can obtain:
\begin{equation*}
 k - \frac{1-\epsilon}{1- \delta(b-a)}\leq d_k(p,q) \leq k - \frac{1-\epsilon-\delta(b-a)}{1+\delta(b-a)}, 
\end{equation*}
since
\begin{equation*}
\int_a^b q(x)=1-\epsilon.
\end{equation*}
The inequality can be obtained by observing that $|p(x)-q(x)|<\delta$ and so $q(x)-\delta<p(x)<q(x)+\delta$ which implies:
\begin{equation*}
\min(p(x),q(x)) \leq q(x), \quad \max(p(x),q(x)) \geq q(x)-\delta
\end{equation*}
and 
\begin{equation*}
\begin{aligned}
&\min(p(x),q(x)) \geq q(x)-\delta, \\ & \max(p(x),q(x)) \leq q(x)+\delta.
\end{aligned}
\end{equation*}
Now, as desired, if $\epsilon$ and $\delta$ are relatively small, $d_k(p,q)$ takes on a value close to $k-1$ and so $o(p,q) \to 1$. 
\end{proof} \noindent

\subsection{Details on Remark \ref{trivial_remark}}\label{appendix:2}
\textbf{Remark} \textit{
Let $f \in L^1(\mathbb{R})$, then $\forall \epsilon >0$, $\exists\, a \in \mathbb{R}_{\geq 0}$ such that:
\begin{equation*}
\int_ a^\infty |f(x)|\,dx <\epsilon, \quad \int_ {-\infty}^{-a} |f(x)|\,dx<\epsilon.
\end{equation*}
}

In order to demonstrate the statement in Remark \ref{trivial_remark}, we observe that since $f \in L^1$ then, by definition,
\begin{equation*}
\int_{-\infty}^{+\infty} |f(x)|\,dx=M\geq0.
\end{equation*}
By continuity of the integral function, as $a\to \infty$,
\begin{equation*}
\int_{-a}^{+a} |f(x)|\,dx \to \int_{-\infty}^{+\infty} |f(x)|\,dx.
\end{equation*}
As a consequence, it cannot be the case that exists $\epsilon>0$ such that for any $a \geq 0$,
\begin{equation*}
\int_{a}^{+\infty} |f(x)|\,dx> \epsilon \quad \text{or} \quad \int_{-\infty}^{-a} |f(x)|\,dx > \epsilon.
\end{equation*}

\subsection{Derivation of Theorem \ref{theoremdiss}}\label{appendix:3} 

\textbf{Proposition} \textit{
Given $p,q$ real positive integrable functions such that $p\neq q$ (almost everywhere), then $d_k(p,q)>k-1$.}

\begin{proof}
Consider two functions $p,q$ such that $p\neq q$. It is straightforward that if $\text{supp}(p) \neq \text{supp}(q)$ then $d_k(p,q)>k-1$ by how the measure of overlap is defined. Indeed, in the worst case scenario $p=q$ in $\text{supp}(p) \cap \text{supp}(q)$, but with $\text{supp}(p) \neq \text{supp}(q)$, then:
\begin{equation*}
\text{supp}(p) \cap \text{supp}(q) \subsetneq \text{supp}(p) \cup \text{supp}(q) 
\end{equation*}
which implies
\begin{equation*}
 o(p, q) = \frac{\int_{\text{supp}(p) \cap \text{supp}(q)} \min(p(x),q(x)) dx}{\int_{\text{supp}(p) \cup \text{supp}(q)} \max(p(x),q(x)) dx} < 1.
\end{equation*}
As a consequence, let us assume $\text{supp}(p)=\text{supp}(q)=D$. Since $p\neq q$, at least one among
\begin{equation*}
\begin{aligned}
&A= \{ x \in D \,\,|\,\ p(x)<q(x) \}, \\ & B= \{ x \in D \,\,|\,\ q(x)<p(x) \}
\end{aligned}
\end{equation*}
has a positive measure. Assume $\mu(A)>0$, therefore,
\begin{equation*}
\frac{\int_{D} \min(p(x),q(x)) dx}{\int_{D} \max(p(x),q(x)) dx}= \frac{\int_A p(x) + \int_{D\setminus A} q(x)}{\int_{A} q(x)+ \int_{D\setminus A} p(x)} <1
\end{equation*}
because 
\begin{equation*}
\begin{aligned}
\int_{D\setminus A} q(x)&\leq\int_{D\setminus A} p(x),  \\ 
  \int_A p(x) &< \int_{A} q(x).
\end{aligned}
\end{equation*}
The case $\mu(B)>0$ is analogous.
\end{proof} \noindent

\noindent
Now let us recall the following definition.
\begin{definition} \label{dm_def}
    A dissimilarity measure (DM) $d$ on $X$ is a function $d \colon X \times X \to \mathbb{R}$ such that 
\begin{equation*}
\begin{aligned}
    &\exists\, d_0\,\,\text{s.t.}\,\,-\infty< d_0 \leq d(x,y)<\infty  \quad \forall x,y \in X, \\
    & d(x,x)=d_0 \quad \forall x \in X,\\
    & d(x,y)=d(y,x) \quad \forall x,y \in X.
\end{aligned}
\end{equation*}
If in addition 
\begin{equation*}
\begin{aligned}
    & d(x,y)=d_0 \iff x=y, \\
    & d(x,z)\leq d(x,y)+d(y,z)\quad \forall\,x,y,z \in X,  
\end{aligned}
\end{equation*}
$d$ is called a metric DM on $X$.
\end{definition}
\noindent
Let us investigate the triangular inequality,
\begin{equation*}
d_k(p,r) \leq d_k(p,q)+ d_k(q,r)
\end{equation*}
which in our case becomes:
\begin{equation*}
k - o(p,r) \leq 
\ k - o(p,q) \\
 + k - o(q,r).
\end{equation*}
Simplified, it becomes:
\begin{equation*}
o(p,q) + 
o(q,r) - o(p,r) \leq k
\end{equation*}
which is always satisfied if $k\geq 2$ because every term on the left is positive and bounded by $1$. As a consequence of this result combined with Proposition \ref{positivity}, noticing that the symmetry is guaranteed and that $d_0$ in Definition \ref{dm_def} in our case is $k-1$, we have the following theorem:

\begin{theorem}
\textit{The measure $d_k(p,q)$ is a metric dissimilarity measure on $\Omega$ for $k\geq 2$, where}
\[
\Omega = \{f \in  L^1(\mathbb{R})\,\,|\,\, \text{supp}(f) \neq \emptyset\,\, \text{and}\,\, f(x)\geq 0 \}.
\]
\end{theorem}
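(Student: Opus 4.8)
The plan is to verify, one by one, the five conditions in Definition \ref{dm_def} for $d_k(p,q)=k-o(p,q)$ on $\Omega$, taking the reference constant to be $d_0=k-1$. Most of the groundwork is already in place. The elementary inclusion $\text{supp}(p)\cap\text{supp}(q)\subset\text{supp}(p)\cup\text{supp}(q)$ together with $p,q\ge 0$ yields $0\le o(p,q)\le 1$, hence $k-1\le d_k(p,q)\le k$, which settles the boundedness requirement with $d_0=k-1$. For reflexivity I would observe that when $p=q$ the numerator and denominator of $o(p,q)$ coincide (both equal $\int_{\text{supp}(p)}p$), so $o(p,p)=1$ and $d_k(p,p)=k-1=d_0$. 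Symmetry is immediate because $o(p,q)$ is assembled from the symmetric operations $\min$, $\max$, $\cap$ and $\cup$, so $o(p,q)=o(q,p)$ and therefore $d_k(p,q)=d_k(q,p)$.

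For the identity of indiscernibles, $d_k(p,q)=d_0\iff p=q$ (almost everywhere), one direction is the reflexivity just computed. For the converse I would invoke Proposition \ref{positivity}: if $p\neq q$ almost everywhere then $d_k(p,q)>k-1=d_0$, so $d_k(p,q)=d_0$ forces $p=q$. This is where the substance of Proposition \ref{positivity} is really used, since it already disposes of both the case $\text{supp}(p)\neq\text{supp}(q)$ (strict inclusion of the intersection in the union) and the case of equal supports with $p\neq q$ on a set of positive measure.

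The remaining, and only genuinely structural, axiom is the triangle inequality $d_k(p,r)\le d_k(p,q)+d_k(q,r)$. Substituting the definition and cancelling the constants, this is equivalent to
\begin{equation*}
o(p,q)+o(q,r)-o(p,r)\le k.
\end{equation*}
At this point I would use nothing beyond the bounds already established: since $o(p,q)\le 1$ and $o(q,r)\le 1$ while $o(p,r)\ge 0$, the left-hand side is at most $2$, so the inequality holds for every $k\ge 2$, which is exactly the hypothesis. Combining this with boundedness, reflexivity, symmetry, and the identity of indiscernibles gives the claim.

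I would emphasise that the restriction $k\ge 2$ is precisely what lets the triangle inequality fall out of the crude bound $o(p,q)+o(q,r)\le 2$ with no finer analysis of the overlap functional. Consequently the main obstacle is \emph{not} in this proof at all: for $k\ge 2$ the argument is a direct assembly of the overlap bounds and Proposition \ref{positivity}. The difficulty is deferred to the borderline regime $k<2$ (in particular $k=1$), where the slack vanishes and the triangle inequality must instead be obtained from the triangle inequality for the Jaccard distance, which is the content of the separate and stronger Theorem \ref{distance_theorem}.
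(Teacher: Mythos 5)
Your proof is correct and follows essentially the same route as the paper's: it establishes symmetry and the bounds $k-1\le d_k\le k$ directly from the definition of $o(p,q)$, invokes Proposition \ref{positivity} for the identity of indiscernibles, and obtains the triangle inequality for $k\ge 2$ from the crude bound $o(p,q)+o(q,r)-o(p,r)\le 2$. Your closing remark about the slack vanishing for $k<2$ also matches the paper's motivation for deferring that case to Theorem \ref{distance_theorem}.
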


\subsection{Derivation of Theorem \ref{distance_theorem}} \label{appendix:4}

\textbf{Theorem} \textit{ $d_1(p,q)$ is a metric on $\Omega$. Moreover, $d_k(p,q)$ is a dissimilarity measure on $\Omega$ for $k\in [1,2)$.}

According to our definition, we are left to analyze the case for $k \in [1,2)$ in \eqref{eq_diss}. First of all, note that the Jaccard distance defined for two sets $A,B$:
\begin{equation*}
d_J(A,B)= 1-\frac{|A \cap B|}{|A \cup B|}
\end{equation*}
satisfies the triangle inequality (cfr. \citep{triangle_ineq}). As a consequence, if we consider the functions $p(x)=\mathbf{1}_A, q(x)=\mathbf{1}_B, r(x)=\mathbf{1}_C$,
\begin{equation*}
d_1(p,r) \leq d_1(p,q)+ d_1(q,r)
\end{equation*}
and of course if we add on the lhs $k-1$ and on the rhs $2(k-1)$ the inequality still holds, hence:
\begin{equation*}
d_k(p,r) \leq d_k(p,q)+ d_k(q,r).
\end{equation*}
Now, the idea is to prove if the inequality is valid for general characteristic functions. Therefore, if: 
\begin{equation*}
d_k(\alpha\mathbf{1}_A,\gamma\mathbf{1}_C) \leq d_k(\alpha\mathbf{1}_A,\beta\mathbf{1}_B)+ d_k(\beta\mathbf{1}_B,\gamma\mathbf{1}_C)
\end{equation*} 
for $\alpha,\beta,\gamma \in \mathbb{R}_{\geq 0}$.
Without loss of generality (as we will se later), we can assume that $\alpha\leq \beta \leq \gamma$, so we have to prove that:
\begin{equation*}
1-\frac{\alpha |A \cap C|}{\gamma |A \cup C|} \leq 1-\frac{\alpha|A \cap B|}{\beta |A \cup B|} +1-\frac{\beta |B \cap C|}{\gamma |B \cup C|}, \
\end{equation*}
which can be rewritten as:
\begin{equation*}
\frac{\beta |B \cap C|}{\gamma |B \cup C|}+\frac{\alpha|A \cap B|}{\beta |A \cup B|}-\frac{\alpha |A \cap C|}{\gamma |A \cup C|} \leq 1
\end{equation*}
but this is true, since we know that:
\begin{equation*}
\begin{aligned}
&\frac{|B \cap C|}{|B \cup C|}+\frac{|A \cap B|}{|A \cup B|}-\frac{|A \cap C|}{|A \cup C|} \leq 1,\\ &\text{because}\;
0\leq \frac{\alpha}{\beta}, \frac{\alpha}{\gamma},\frac{\beta}{\gamma} \leq 1.
\end{aligned}
\end{equation*}
We observe that the order of $\alpha,\beta,\gamma$ is irrelevant since we have a minimum over a maximum and so the ratio is always less or equal than $1$. Now, let us assume that $p(x), q(x), r(x)$ are simple functions, i.e.,
\begin{equation*}
\begin{aligned}   
&p(x) = \sum_{i=1}^{s} a_i\mathbf{1}_{A_i}, \quad q(x) = \sum_{i=1}^{s} b_i\mathbf{1}_{B_i}, \\ &r(x) = \sum_{i=1}^{s} c_i\mathbf{1}_{C_i},
\end{aligned}
\end{equation*}
where $s \in \mathbb{N}$, $a_i,b_i,c_i \geq 0$ and $A_i,B_i,C_i$ are disjoint sets. It is trivial now, since the inequality holds for general characteristic functions, that:
\begin{equation*}
d_k(p, r) \leq d_k(p, q) + d_k(q, r).
\end{equation*}
Any function in $L^1(\mathbb{R})$ can be approximated with a sequence of simple functions (cfr. \citep{lpspaces}). By the dominated convergence theorem, we can exchange the limit with the integral and so, combining it with the results for simple functions, we conclude that:
\begin{equation*}
d_k(p, r) \leq d_k(p, q) + d_k(q, r) \quad p,q,r \in L^1(\mathbb{R}).
\end{equation*}
As a conclusion, we can extend Theorem \ref{theoremdiss} to the case $k \geq 1$, and so we obtain Theorem \ref{distance_theorem}.

\subsection{Hyperparameters of CID}\label{appendix:5}
As discussed in \autoref{sec:applicability}, CID’s flexibility enables the use of any density estimator and counterfactual generator. In this section, we showcase the applicability of our framework evaluating combinations of three kernel density estimators—Gaussian, Epanechnikov, and Exponential—with random counterfactual generators. Additionally, we assess the effect of changing the counterfactual generation method to ‘genetic' \citep{buliga2025generating}. The ‘gradient’ method was not applicable in our setup due to the use of a non-differentiable model (RandomForestClassifier). The comparison is carried out at three levels for $100$ instances:
\begin{itemize}
    \item Feature Agreement: we compute the top-k with $k=4$ between feature importance vectors across methods.
    \item Distributional Analysis: we visualize the distribution of importance scores per method.
    \item Faithfulness: we compute sufficiency and comprehensiveness to compare methods.
\end{itemize}

The experiments reveal a general alignment between the Epanechnikov and exponential kernels (we note that changing the kernel had no effect in terms of computational time), as evidenced by both the feature agreement and the distribution plots, but they diverge noticeably from those obtained using the Gaussian kernel (\autoref{tab:fa_comparison_diabetes}, \autoref{tab:fa_comparison_heart}, \autoref{fig:distribution_diabetes} and \autoref{fig:distribution_heart}). This pattern is consistent across both datasets. A possible explanation is that Epanechnikov and exponential kernels both decay more rapidly than the Gaussian kernel. This leads to sharper density estimations, which in turn results in more aligned feature importance profiles. The Gaussian kernel, being smoother and more globally sensitive, captures broader structures and therefore introduces differences both in the density estimation and in the resulting scores.

Interestingly, the choice of counterfactual generation method appears to have a limited impact on the overall distributional results (\autoref{fig:distribution_cf_diabetes} and \autoref{fig:distribution_cf_heart})—despite the fact that the genetic method required more than five times the computation time compared to the random baseline. However, when analyzing feature agreement ($0.16 \pm 0.03$ for the Diabetes dataset and $0.16 \pm 0.04$ for the Heart dataset w.r.t the baseline setting), which is sensitive to even small shifts in local feature importance, the choice of counterfactual generation method plays a more nuanced role.

To conclude the analysis, we report the faithfulness results for the diabetes dataset under the considered settings (\autoref{tab:faithfulness_results_diabetes} and \autoref{tab:faithfulness_results_heart}). Gaussian kernel is significantly better than the alternatives in terms of comprehensiveness, but there are no conclusive results for any kernel in terms of sufficiency. Overall, despite increased variance in both faithfulness metrics, the Gaussian kernel achieves better results, also considering that changing the CF method had a significant impact in computational terms.

In conclusion, the results of this section indicate that both the choice of kernel density estimator and the counterfactual generation method influence the resulting attribution method. However, the substantially higher computational cost of the genetic approach makes its adoption generally impractical. From a faithfulness perspective, the standard setting—Gaussian kernel combined with randomly generated counterfactuals—achieves a markedly higher comprehensiveness. When these findings are considered alongside the other results presented in this work (\autoref{faith_section}), they suggest that the standard configuration represents a sound choice.

\begin{table}[!h]
\caption{Feature agreement values across different kernels for the Diabetes dataset.}
\centering
\resizebox{\linewidth}{!}{%
\begin{tabular}{lccc}
\hline
& Gaussian & Epanechnikov & Exponential \\
\hline
Gaussian     & $1.00 \pm 0.00$ & $0.24 \pm 0.05$ & $0.27 \pm 0.05$ \\
Epanechnikov & $0.24 \pm 0.05$ & $1.00 \pm 0.00$ & $0.60 \pm 0.06$ \\
Exponential  & $0.27 \pm 0.05$ & $0.60 \pm 0.06$ & $1.00 \pm 0.00$ \\
\hline
\end{tabular}}
\label{tab:fa_comparison_diabetes}
\end{table}

\begin{table}[!h]
\caption{Feature agreement values across different explainers for the Heart dataset.}
\centering
\resizebox{\linewidth}{!}{%
\begin{tabular}{lccc}
\hline
 & Gaussian & Epanechnikov & Exponential \\
\hline
Gaussian     & $1.00 \pm 0.00$ & $0.16 \pm 0.04$ & $0.15 \pm 0.04$ \\
Epanechnikov & $0.16 \pm 0.04$ & $1.00 \pm 0.00$ & $0.50 \pm 0.06$ \\
Exponential  & $0.15 \pm 0.04$ & $0.50 \pm 0.06$ & $1.00 \pm 0.00$ \\
\hline
\end{tabular}}
\label{tab:fa_comparison_heart}
\end{table}

\begin{figure}[!h]
    \centering
    \includegraphics[width=\columnwidth]{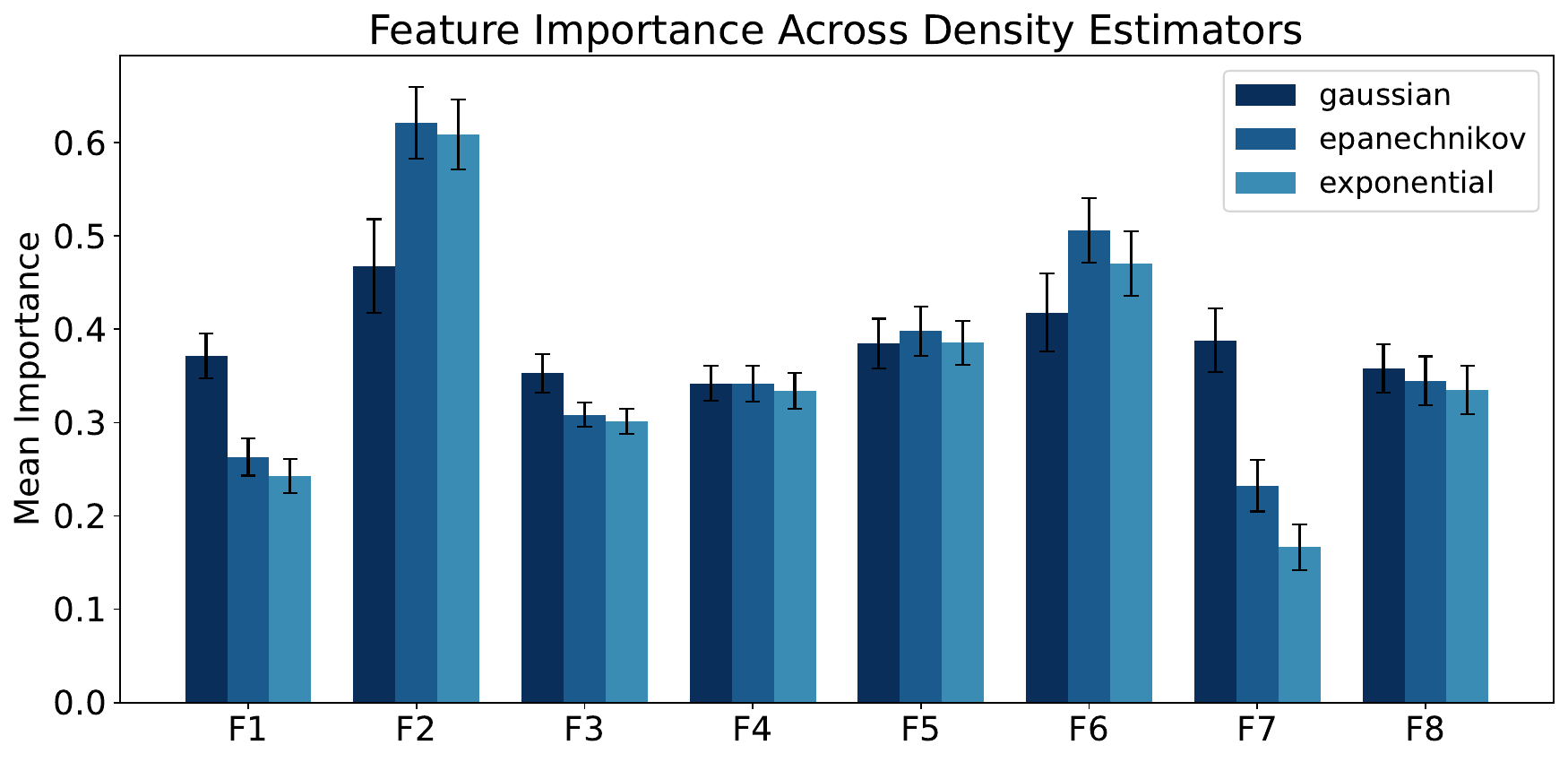}
    \caption{The distribution of importance values according to the various types of kernels considered for the Diabetes dataset.}
    \label{fig:distribution_diabetes}
\end{figure} 

\begin{figure}[!h]
    \centering
    \includegraphics[width=\columnwidth]{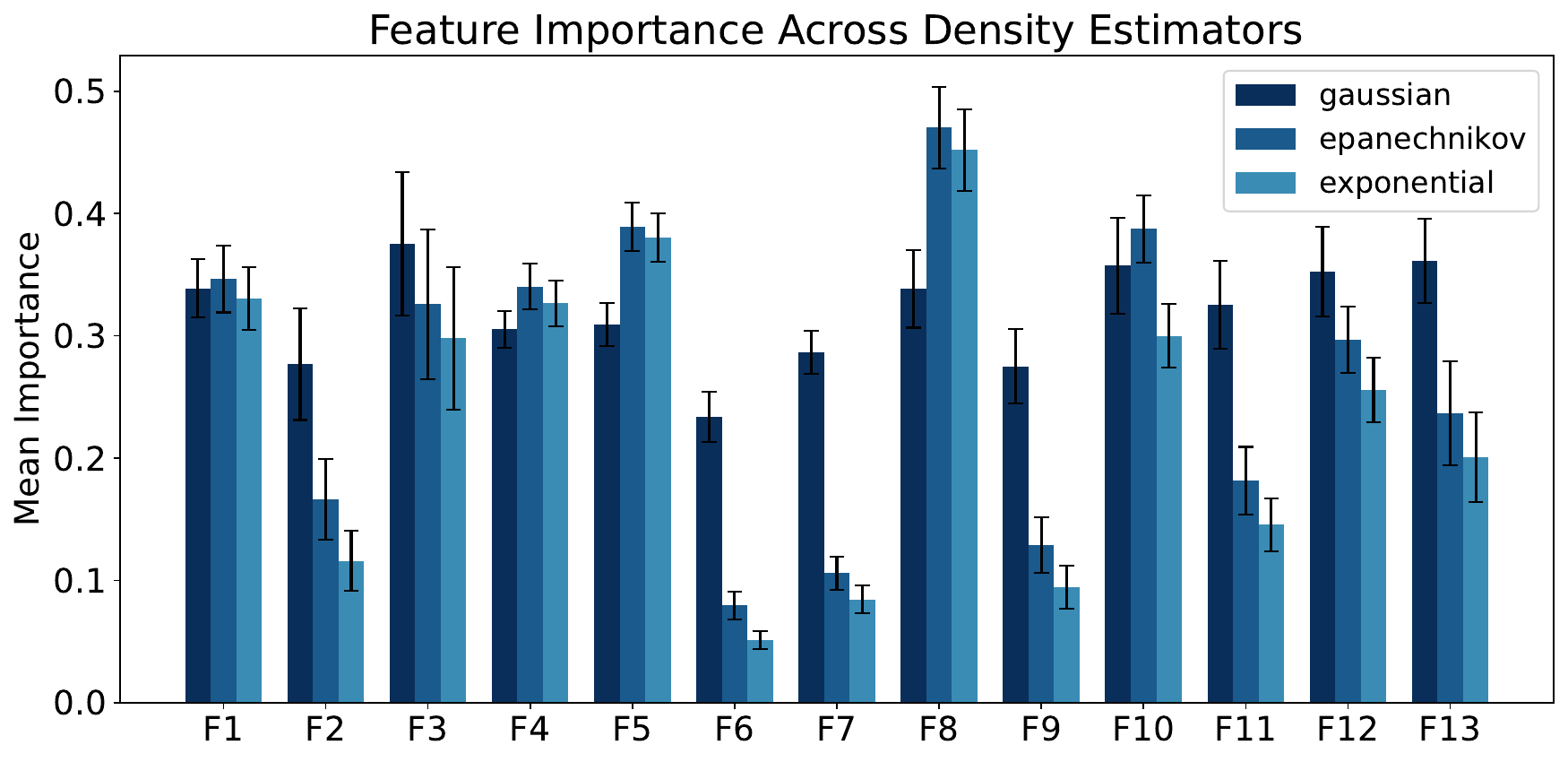}
    \caption{The distribution of importance values according to the various types of kernels considered for the Heart dataset.}
    \label{fig:distribution_heart}
\end{figure}

\begin{figure}[!h]
    \centering
    \includegraphics[width=\columnwidth]{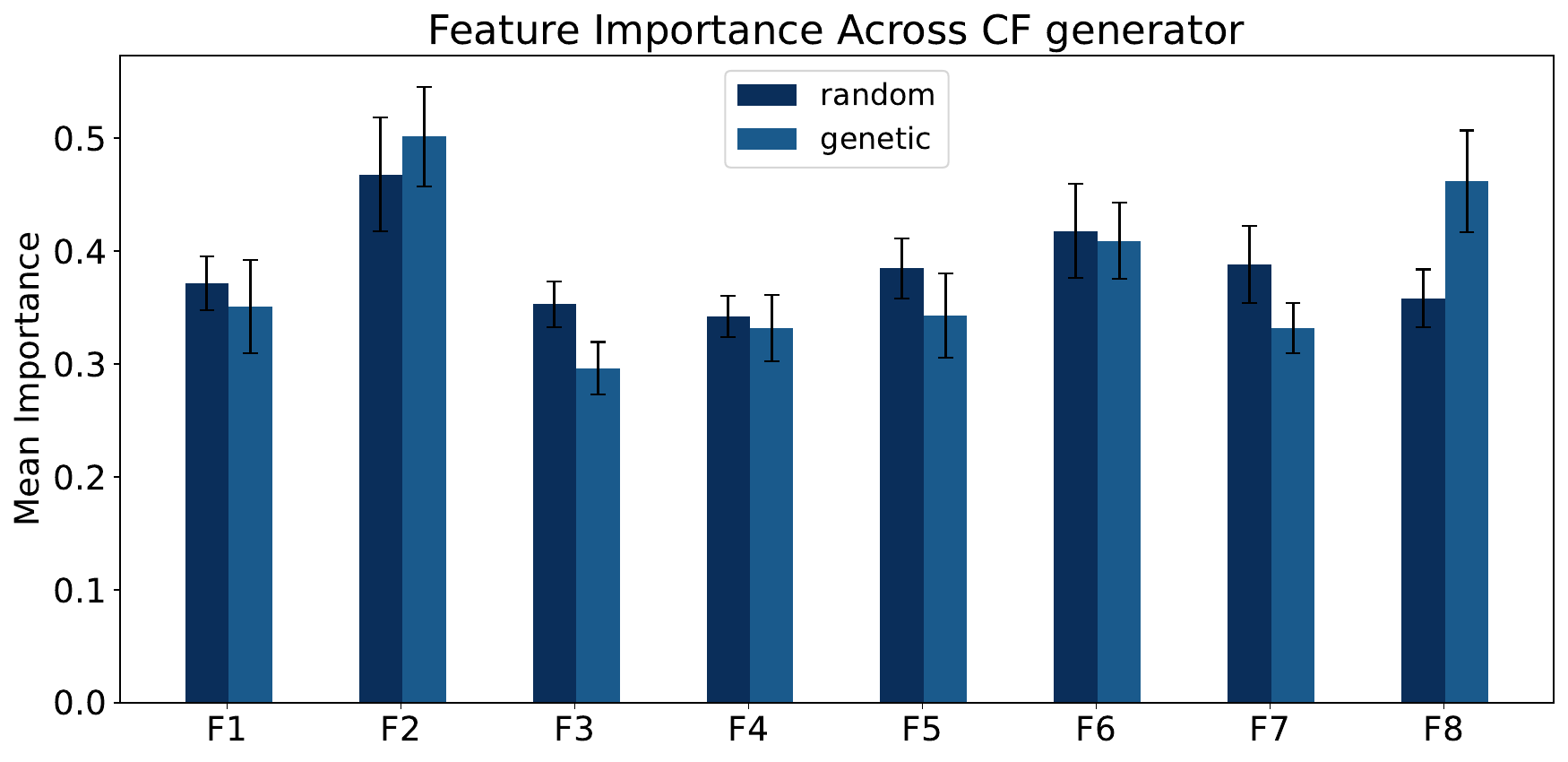}
    \caption{The distribution of importance values according to the CF generation process considered for the Diabetes dataset.}
    \label{fig:distribution_cf_diabetes}
\end{figure}

\begin{figure}[ht]
    \centering
    \includegraphics[width=\columnwidth]{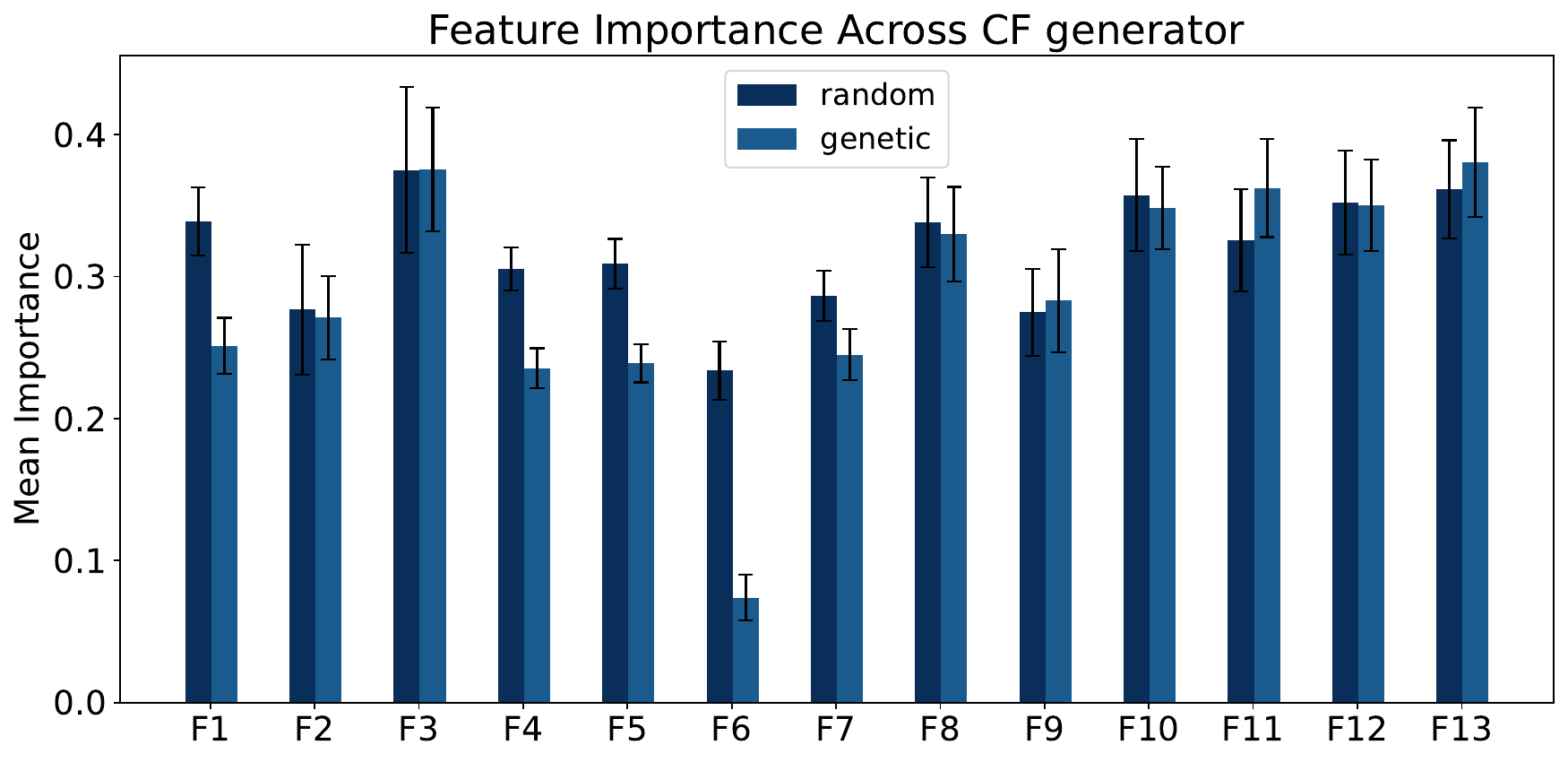}
    \caption{The distribution of importance values according to the CF generation process considered for the Heart dataset.}
    \label{fig:distribution_cf_heart}
\end{figure}

\begin{table}[ht]
\caption{Faithfulness results for the Diabetes dataset across various settings.}
\centering
\resizebox{\linewidth}{!}{%
\begin{tabular}{lcc}
\hline
\textbf{Method} & \textbf{Comprehensiveness} & \textbf{Sufficiency} \\
\hline
Gaussian       & $\boldsymbol{0.5405 \pm 0.2559}$ & $-0.1288 \pm 0.1444$ \\
Epanechnikov   & $0.1589 \pm 0.0278$ & $-0.0004 \pm 0.0147$ \\
Exponential    & $0.1568 \pm 0.0276$ & $-0.0014 \pm 0.0149$ \\
\hline
Genetic        & $0.1195 \pm 0.0247$ & $0.0506 \pm 0.0220$ \\
\hline
\end{tabular}}
\label{tab:faithfulness_results_diabetes}
\end{table}

\begin{table}[!h]
\caption{Faithfulness results for the Heart dataset across various settings.}
\centering
\resizebox{\linewidth}{!}{%
\begin{tabular}{lcc}
\toprule
\textbf{Method} & \textbf{Comprehensiveness} & \textbf{Sufficiency} \\
\midrule
Gaussian      & $\boldsymbol{1.8017 \pm 0.0965}$ & $2.2129 \pm 0.1357$ \\
Epanechnikov  & $1.3794 \pm 0.0241$ & $2.1067 \pm 0.0167$ \\
Exponential   & $1.3731 \pm 0.0236$ & $2.1127 \pm 0.0156$ \\
\hline
Genetic       & $1.3007 \pm 0.0217$ & $2.1541 \pm 0.0208$ \\
\bottomrule
\end{tabular}}
\label{tab:faithfulness_results_heart}
\end{table}

\end{document}